\newcommand{\shortname}{CONTRAST}
\definecolor{citecolor}{HTML}{0071bc}
\let\oldlistofalgorithms\listofalgorithms
\let\listofalgorithms\relax
\let\listofalgorithms\oldlistofalgorithms
\newtheorem{theorem}{Theorem}
\newcommand{\cmark}{\ding{51}}%
\newcommand{\xmark}{\ding{55}}%
\newcommand{\w}{\mathrm{w}}
\newcommand{\f}{\mathrm{f}}
\newcommand{\Lc}{{\cal{L}}}
\newcommand{\lopt}{{\cal{L}}^{\star(t)}_T}
\title{\shortname{}: Continual Multi-source Adaptation to Dynamic Distributions}
\author{%
  Sk Miraj Ahmed$^{2,}$\thanks{Equal contribution; Co-first authors listed alphabetically by last name.} \thanks{Currently at Brookhaven National Laboratory. Work done while the author was at UCR.} \hspace{0.1cm},  Fahim Faisal Niloy$^{1,}$\footnotemark[1] \hspace{0.1cm}, Xiangyu Chang$^{1}$, Dripta S. Raychaudhuri$^{3,}$\thanks{Currently at AWS AI Labs. Work done while the author was at UCR.} \hspace{0.1cm}, \\ \textbf{Samet Oymak}$^{4}$, \hspace{0.1cm}  \textbf{Amit K. Roy-Chowdhury}$^{1}$\\
  $^{1}$University of California, Riverside, $^{2}$Brookhaven National Laboratory, $^{3}$AWS AI Labs, \\ $^{4}$University of Michigan, Ann Arbor\\
  \texttt \small \{sahme047@, fnilo001@,
  cxian008@,
  drayc001@, amitrc@ece.\}ucr.edu, oymak@umich.edu
}
\begin{document}

\maketitle

\begin{abstract}
Adapting to dynamic data distributions is a practical yet challenging task. One effective strategy is to use a model ensemble, which leverages the diverse expertise of different models to transfer knowledge to evolving data distributions. However, this approach faces difficulties when the dynamic test distribution is available only in small batches and without access to the original source data. To address the challenge of adapting to dynamic distributions in such practical settings, we propose \underline{CON}tinual mul\underline{T}i-sou\underline{R}ce \underline{A}daptation to dynamic
di\underline{S}tribu\underline{T}ions (\shortname), 
a novel method that optimally combines multiple source models to adapt to the dynamic test data. 
\shortname~has two distinguishing features. First, it efficiently computes the optimal \emph{combination} weights to combine the source models to adapt to the test data distribution continuously as a function of time. Second, it identifies which of the source model parameters to update so that only the model which is most correlated to the target data is adapted, 
leaving the less correlated ones untouched; this mitigates the issue of ``forgetting" the source model parameters by focusing only on the source model that exhibits the strongest correlation with the test batch distribution. Through theoretical analysis we show that the proposed method is able to optimally combine the source models and prioritize updates to the model least prone to forgetting. Experimental analysis on diverse datasets demonstrates that the combination of multiple source models does at least as well as the best source (with hindsight knowledge), and performance does not degrade as the test data distribution changes over time (robust to forgetting).
\end{abstract}

\section{Introduction}
\label{submission}

Deep neural networks have shown impressive performance on test inputs that closely resemble the training distribution. However, their performance degrades significantly when they encounter test inputs from a different data distribution. Unsupervised domain adaptation (UDA) techniques~\cite{tzeng2017adversarial,tsai2018learning} aim to mitigate this performance drop. Addressing the distribution shift in case of \emph{dynamic data distributions} is even more challenging and practically relevant - in many real-world applications like autonomous navigation, models often encounter dynamically evolving distributions. Furthermore, test data is often accessed in streaming batches rather than all at once, and source data may not always be available due to privacy and storage concerns.



For domain adaptation to dynamically evolving environments, employing a model ensemble can be beneficial, as it allows leveraging the learned knowledge of different models to more effectively mitigate dynamic distribution shifts. Additionally, situations may arise wherein the user has access to a diverse set of pre-trained models across distinct source domains, and no access to source domain data corresponding to each model due to privacy, storage or other constraints. Consequently, training a unified model using the combined source data becomes unfeasible. In those scenarios, it is both reasonable and effective to employ and adapt the entire available array of source models during testing, thereby enhancing performance beyond the scope of single source model adaptation. Moreover, employing a model ensemble provides the flexibility to effortlessly incorporate or exclude models post-deployment, aligning with the user's preferences and the needs of the given task. This flexibility is not achievable with a single domain-generalized model trained on combined source data.

\begin{figure}[t!]
\centering
\includegraphics[width=0.8\columnwidth] {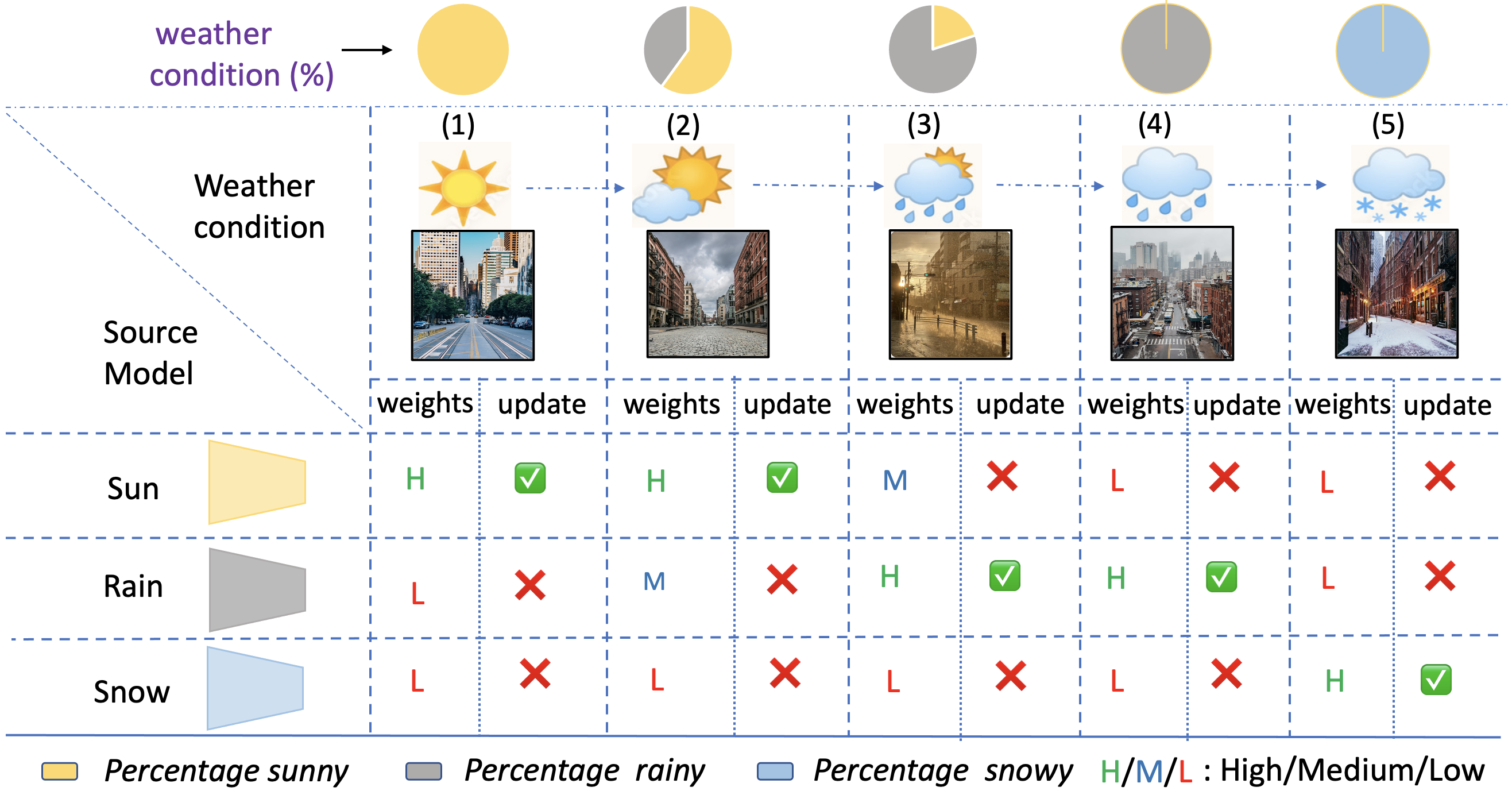}
\caption{\textbf{Problem setup.}  Consider several source models trained using data from different weather conditions. During the deployment of these models, they may encounter varying weather conditions that could be a combination of multiple conditions in varying proportions (represented by the pie charts on top). Our goal is to infer on the test data using the ensemble of models by automatically figuring out proper combination weights and adapting the appropriate models on the fly.} 
\label{fig:intro}
\vspace{-1em}
\end{figure}



As an example, consider a scenario where a recognition model, initially trained on clear weather conditions, faces data from mixed weather scenarios, like sunshine interspersed with rain (see Figure~\ref{fig:intro}). In such cases, employing multiple models - specifically those trained on clear weather and rain — with appropriate weighting can potentially reduce the test error as opposed to relying on a single source model. In this context, the models for clear weather and rain would be assigned higher weights, while models for other weather conditions would receive relatively lesser weightage.

The main challenge of developing such a model ensembling method is to \emph{learn appropriate combination weights to optimally combine the source model ensemble during the test phase as data is streaming in, such that it results in a test error equal or lower than that of the best source model}. 
To solve this, we propose \underline{CON}tinual mul\underline{T}i-sou\underline{R}ce \underline{A}daptation to dynamic
di\underline{S}tribu\underline{T}ions (\shortname) that handles multiple source models and optimally combines them to adapt to the test data.


The efficacy of using multiple source models also extends to preventing \emph{catastrophic forgetting} that may arise when adapting to dynamic distributions for a prolonged time. Consider again the scenario of multiple source models, each trained on a different weather condition. During inference, only the parameters of the models most closely related to the weather encountered during test time will get updates, and the unrelated ones will be left untouched. This ensures that the model parameters do not drift too far from the initial state, since only those related to the test data are being updated. This mechanism mitigates forgetting when the test data distribution varies over a long time scale, as is likely to happen in most realistic conditions. Even if an entirely unrelated distribution appears during testing and there is no one source model to handle it, the presence of multiple sources can significantly reduce the rate at which the forgetting  occurs. This is again because only the most closely related models (clear and rainy weather in the example above) are updated, while others (e.g., snow) are left untouched. Our setting is closely related to Test Time Adaptation methods (TTA)~\cite{wang2020tent}, and ours is the first to address \emph{adaptation of multiple sources for dynamically shifting distributions} during test time.


\noindent\textbf{Main Contributions.} 
Our proposed approach, \shortname, makes the following contributions.
\begin{itemize}[leftmargin=*,topsep=0pt]
\setlength\itemsep{-1pt}
    \item We propose a framework for multi-source adaptation to dynamic distribution shifts from streaming test data and without access to the source data. 
    Our approach has the ability to merge the source models using appropriate combination weights during test time, enabling it to perform just as well as the best-performing source or even surpass it.
    \item Our framework achieves performance on par with the best-performing source and also effectively mitigates catastrophic forgetting when faced with long-term, fluctuating test distributions.
    
    
    \item We provide theoretical insights on \shortname, illustrating how it addresses domain shift by optimally combining source models and prioritizing updates to the model least prone to forgetting.
    \item To demonstrate the real-world advantages of our methodology, we perform experiments on a diverse range of benchmark datasets. 
\end{itemize}

\section{Related Works} 
\noindent \textbf{Unsupervised Domain Adaptation.} 
UDA methods have been applied to many machine learning tasks, including image classification~\cite{tzeng2017adversarial}, semantic segmentation~\cite{tsai2018learning}, object detection~\cite{hsu2020progressive} and reinforcement learning~\cite{raychaudhuri2021cross}, in an effort to address the data distribution shift. Most approaches try to align the source and target data distributions, using techniques such as maximum mean discrepancy~\cite{long2015learning}, adversarial learning~\cite{ganin2016domain,tzeng2017adversarial,raychaudhuri2021cross} and image translation ~\cite{Hoffman_ICML_2018,Luan_CVPR_2019}.
Recently, there has been a growing interest in adaptation using only a pre-trained source model due to privacy and memory storage concerns related to the source data \cite{yang2021generalized,kundu2020universal,wang2022exploring, yang2020unsupervised, li2020model, ding2022source, chen2022contrastive}. These approaches include techniques such as information maximization~\cite{liang2020we,ahmed2021unsupervised,ahmed2022cross}, pseudo labeling~\cite{yeh2021sofa,kumar2023conmix}, and self-supervision~\cite{xia2021adaptive}. 
\begin{table}[h!]
\centering
\vspace{-1.5em}
\caption{\textbf{Comparison of our setting to the existing adaptation settings}. Our proposed setting meets all the criteria that are expected in a comprehensive adaptation framework.}
\resizebox{0.6\columnwidth}{!}{%
\begin{tabular}{ccccc}
\toprule
\textbf{Setting} & \begin{tabular}[c]{@{}c@{}}\textbf{Source}\\ \textbf{Free}\end{tabular} & \begin{tabular}[c]{@{}c@{}}\textbf{Adaptation}\\ \textbf{On the Fly}\end{tabular} & \begin{tabular}[c]{@{}c@{}}\textbf{Dynamic}\\ \textbf{Target}\end{tabular} & \begin{tabular}[c]{@{}c@{}}\textbf{Multi}\\ \textbf{Source}\end{tabular} \\ \midrule
UDA \cite{tzeng2017adversarial}
& \color{red}{\xmark}
& \color{red}{\xmark}
& \color{red}{\xmark}
& \color{ForestGreen}\cmark \\
Source-free UDA \cite{ahmed2021unsupervised}
& \color{ForestGreen}\cmark
& \color{red}{\xmark}
& \color{red}{\xmark}
& \color{ForestGreen}\cmark \\
TTA \cite{wang2020tent}
& \color{ForestGreen}\cmark
& \color{ForestGreen}\cmark
& \color{ForestGreen}\cmark
& \color{red}\xmark \\
\shortname
& \color{ForestGreen}\cmark
& \color{ForestGreen}\cmark
& \color{ForestGreen}\cmark
& \color{ForestGreen}\cmark \\ \bottomrule
\end{tabular}
}
\label{tab:prob_setting}
\vspace{-1em}
\end{table}

\noindent \textbf{Multi-Source Domain Adaptation (MSDA).} 
Both UDA and source-free UDA have been extended to multi-source setting by incorporating knowledge from multiple source models \cite{ahmed2021unsupervised, zhao2020multi}. Notable techniques include discrepancy-based MSDA \cite{guo2018multi}, higher-order moments \cite{peng2019moment}, adversarial methods \cite{xu2018deep}, and Wasserstein distance-based methods \cite{li2018extracting}. However, these methods are specifically tailored to UDA scenarios, where the whole target data is assumed to be available during adaptation. Whereas, in our setting we consider access to a batch of target data at an instance. Another related field is Domain Generalization (DG) \cite{dubey2021adaptive, xiao2022learning} , which refers to training a single model on a combined set of data from different source domains. Hence, DG requires data from all distinct domains to be available altogether during training, which may not be always feasible. 
Additionally, Model Soups \cite{wortsman2022model} is a popular approach to ensemble models fine-tuned on same data distribution, where the weights of multiple models are averaged for inference. On the other hand, we use a weighting approach for model predictions, where models are pre-trained on different source data distributions. In our  problem, inspired by MSDA, \textit{users are only provided with pre-trained source models}.

\vskip 2pt
\noindent \textbf{Adaptation to Dynamic Data.} Few works \cite{Wu_2019_ICCV, rostami2021lifelong, panagiotakopoulos2022online} have addressed the adaptation to dynamic data distributions. However, these works either require source data or the entire target domain data to be available during adaptation. When additional constraints such as streaming target data batches and no access to source data are considered, the setting closely aligns with Test Time Adaptation (TTA). While UDA methods typically require a substantial volume of target domain data for model adaptation, which is performed offline and prior to deployment, TTA adjusts a model post-deployment, during inference or testing.
One of the early works \cite{li2016revisiting} use test-batch statistics for batch normalization adaptation. Tent \cite{wang2020tent} updates a pre-trained source model by minimizing entropy and updating batch-norm parameters. DUA \cite{mirza2022norm} updates batch-norm stats with incoming test batches. TTA methods have also been applied to segmentation problems \cite{valanarasu2022fly, shin2022mm, liu2021source, hu2021fully}. When these TTA methods are used to adapt to changing target distribution, they usually suffer from `forgetting' and `error accumulation' \cite{wang2022continual}. In order to solve this, CoTTA \cite{wang2022continual} restores source knowledge stochastically to avoid drifting of source knowledge. EATA \cite{niu2022efficient} adds a regularization loss to preserve important weights for less forgetting. While motivated by TTA, our method considers multi-source adaptation in a dynamic setting and has an inherent capability to mitigate forgetting. In Table \ref{tab:prob_setting}, we illustrate a comparison between our setting and existing settings.

\section{\shortname \ Framework}
\subsection{Problem Setting}

In this problem setting, we propose to combine multiple pre-trained models during test time through the application of suitable combination weights, determined based on a limited number of test samples. Specifically, we will focus on the classification task that involves $K$ categories. Consider the scenario where we have a collection of $N$ source models, denoted as $\{\mathrm{f}^j_S\}_{j=1}^N$, that we aim to deploy during test time. In this situation, we assume that a sequence of test data  $\{x_i^{(1)}\}_{i=1}^B\rightarrow \{x_i^{(2)}\}_{i=1}^B\rightarrow \ldots \{x_i^{(t)}\}_{i=1}^B \rightarrow \ldots$ are coming batch by batch in an online fashion, where $t$ is the index of time-stamp and $B$ is the number of samples in the test batch. We also denote the test distribution at time-stamp $t$ as $\mathcal{D}_T^{(t)}$, which implies $\{x_i^{(t)}\}_{i=1}^B\sim\mathcal{D}_T^{(t)}$. Motivated by \cite{ahmed2021unsupervised}, we model the test distribution in each time-stamp $t$ as a linear combination of source distributions where the combination weights are denoted by $\{\mathrm{w}^{(t)}_j\}_{j=1}^N$. Thus, our inference model on test batch $t$ can be written as  $\mathrm{f}_T^{(t)} = \sum_{j=1}^N \mathrm{w}_j^{(t)} \mathrm{f}^{j(t)}_S$ where $\mathrm{f}^{j(t)}_S$ is the adapted $j$-th source in time stamp $t$. Based on this setup our objective is twofold:

\begin{enumerate}[left=0pt]
\item We want to determine the optimal combination weights $\{\mathrm{w}^{(t)}_j\}_{j=1}^N$ for the current test batch such that the test error for the optimal inference model is lesser than or equal to the test error of best source model. Mathematically we can write this as follows:
    \begin{equation}
        \epsilon^{(t)}_{test}(\mathrm{f}_T^{(t)}) \leq \underset{1\leq j\leq N}{\text{min}} \ \epsilon^{(t)}_{test}(\mathrm{f}_S^j),
        \label{obj1}
    \end{equation}
    where $\epsilon^{(t)}_{test}(\cdot)$ evaluates the test error on $t$-th batch.

    \item We also aim for the model to maintain consistent performance on source domains, as it progressively adapts to the changing test conditions. This is necessary to ensure that the model has not catastrophically forgotten the original training distribution of the source domain and maintains its original performance if the source data is re-encountered in the future
    We would ideally want to have:
    \begin{equation}
        \epsilon_{src}(\mathrm{f}^{j(t)}_S) \approx \epsilon_{src}(\mathrm{f}_S^j) \quad \forall j,t,
        \label{obj2}
    \end{equation}
    where, $\epsilon_{src}(\mathrm{f}_S^j)$ denote the test error of $j$-th source on its corresponding test data when using the original source model $\mathrm{f}_S^j$, whereas $\epsilon_{src}(\mathrm{f}^{j(t)}_S)$ represents the test error on the same test data using the $j$-th source model adapted up to time step $t$, denoted as $\mathrm{f}^{j(t)}_S$.
    
\end{enumerate}

\subsection{Overall Framework}

Our framework undertakes two operations on each test batch. First, we learn the combination weights for the current batch at time step $t$ by freezing the model parameters. Then, we update the model corresponding to the largest weight with existing state-of-the-art TTA methods, which allows us to fine-tune the model and improve its performance. This implies that the model parameters of source $j$ might get updated up to $p$ times at time-step $t$, where $0 \leq p \leq (t-1)$.

In other words, the states of the source models evolve over time depending on the characteristics of the test batches up to the previous time step. To formalize this concept, we define the state of the source model $j$ at time-step $t$ as $\mathrm{f}_S^{j(t)}$.
In the next section, we will provide a detailed explanation of both aspects of our framework: (i) learning the combination weights, and (ii) updating the model parameters. By doing so, we aim to provide a comprehensive understanding of how our approach works in practice.

\begin{figure*}[h]
\begin{center}
    \includegraphics[width= 1\textwidth]{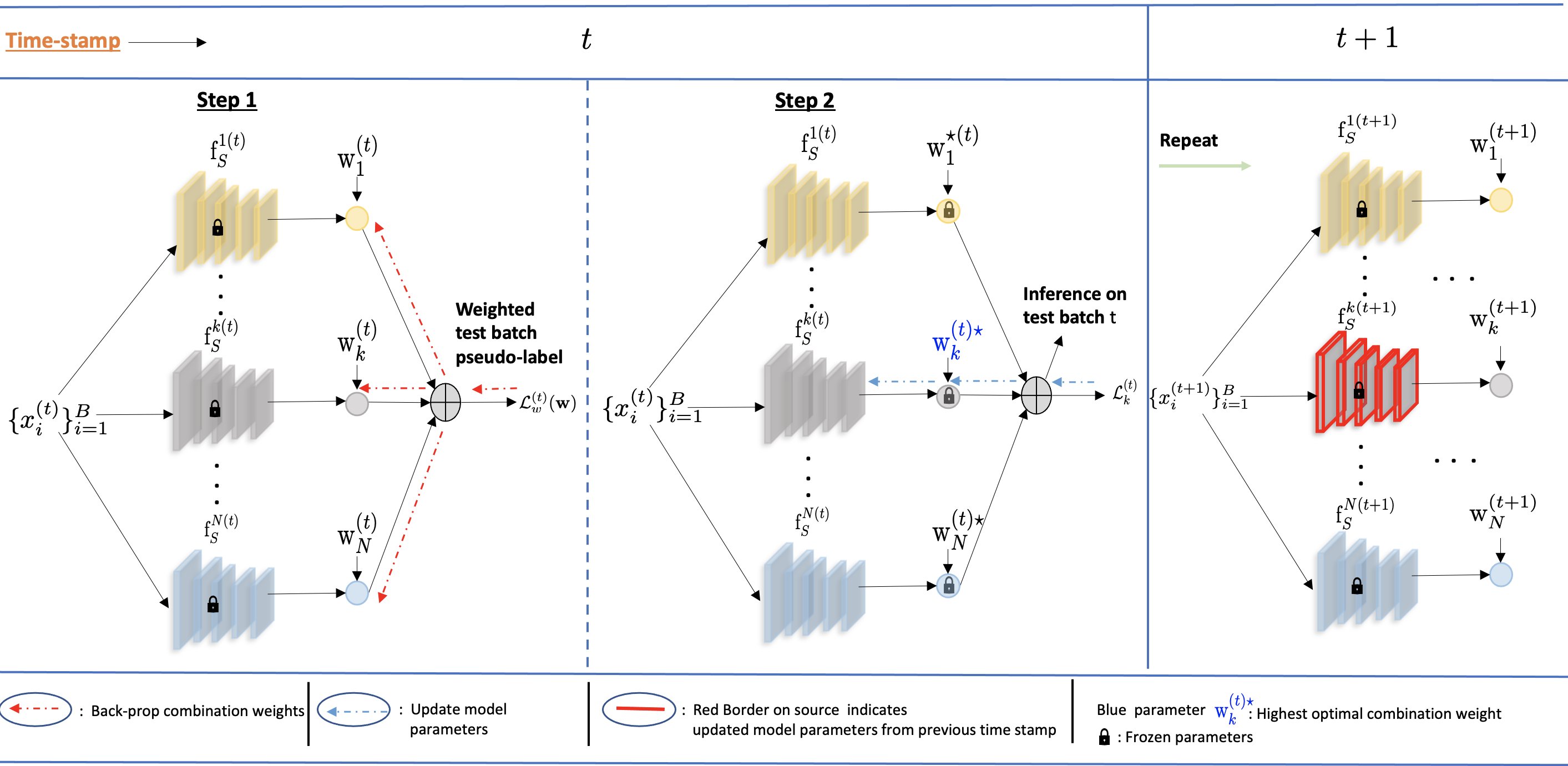}
\end{center}
\caption{\textbf{Overall Framework.} During test time, we aim to adapt multiple source models in a manner such that it optimally blends the sources with suitable weights based on the current test distribution. Additionally, we update the parameters of only one model that exhibits the strongest correlation with the test distribution.} 
\label{fig:framework}
\vspace{-1em}
\end{figure*}

\subsection{Learning the combination weights}
\label{sec:learning_comb_weight}
For an unlabeled target sample $x_i^{(t)}$ that arrives at time-stamp $t$, we denote its pseudo-label, as predicted by source $j$, as $\hat{y}_{ij}^{(t)}=\mathrm{f}_S^{j(t)}(x_i^{(t)})$, where $\mathrm{f}_S^{j(t)}$ is the state of source $j$ at time-stamp $t$. 
Now we linearly combine these pseudo-labels by source combination weights 
$\mathbf{w}=[\mathrm{w}_1 \ \mathrm{w}_2 \ldots \mathrm{w}_N]^\top \in \mathbb{R}^N$ to get weighted pseudo-label $\hat{y}_i^{(t)} = \sum_{j=1}^N \mathrm{w}_j \hat{y}_{ij}^{(t)}$. Using these weighted pseudo-labels for all the samples in the $t$-th batch we calculate the expected Shannon entropy as,
\begin{equation}
    \mathcal{L}_{w}^{(t)}(\mathbf{w}) = -\mathbb{E}_{\mathcal{D}_T^{(t)}} \sum_{c=1}^K \hat{y}_{ic}^{(t)} \log(\hat{y}_{ic}^{(t)}) 
    \label{w_ent}
\end{equation}
Based on this loss we solve the following optimization:
\begin{mini}|l|
{\mathbf{w}}{\mathcal{L}_w^{(t)}(\mathbf{w})}{}{}
\addConstraint {\mathrm{w}_j \geq 0, \forall j \in \{1, 2, \dots, N\}} 
\addConstraint {\sum_{j=1}^n \mathrm{w}_j=1}
\label{opt:main_opt}
\end{mini}
%
%
Suppose we get $\mathbf{w}^{\star(t)}$ to be the optimal combination weight vector by performing the optimization in (\ref{opt:main_opt}). In such case, the optimal inference model for test batch $t$ can be expressed as follows:
\begin{equation}
    \mathrm{f}_T^{(t)} = \sum_{j=1}^N \mathrm{w}_j^{\star(t)} \mathrm{f}^{j(t)}_S
    \label{model_inf}
\end{equation}
Thus, by learning ${\mathbf{w}}$ in this step, we satisfy Eqn.~\eqref{obj1}.

\noindent\textbf{Model parameter update.} 
\label{param_update}
After obtaining $\mathbf{w}^{\star(t)}$, next we select the most relevant source model $k$ given by $k= \text{arg} \ \underset{1\leq j\leq N}{\text{max}} \ \mathrm{w}^{\star(t)}_j$. This indicates that the distribution of the current test batch is most correlated with the source model $k$. We then adapt model $k$ to the test batch $t$ using any state-of-the-art single source method that adapts to dynamic target distributions. Specifically, we employ three distinct adaptation approaches: (i) TENT \cite{wang2020tent}, (ii) CoTTA \cite{wang2022continual}, and (iii) EaTA \cite{niu2022efficient}. 

\noindent\textbf{Optimization strategy for (\ref{opt:main_opt}).} 
Solving the optimization problem in Eq.~\ref{opt:main_opt} is a prerequisite for inferring the current test batch. As inference speed is critical for test-time adaptation, it is desirable to learn the weights quickly. To achieve this, we design two strategies: (i) selecting an appropriate initialization for $\mathbf{w}$, and (ii) determining an optimal learning rate.\\
\noindent\textbf{(i) Initialization:} Pre-trained models contain information about expected batch mean and variance in their Batch Norm (BN) layers based on the data they were trained on. To leverage this information, we extract these stored values from each source model prior to adaptation. Specifically, we denote the expected batch mean and standard deviation for the $l$-th layer of the $j$-th source model as $\mu_l^j$ and $\sigma_l^j$, respectively.

During testing on the current batch $t$, we pass the data through each model and extract its mean and standard deviation from each BN layer. We denote these values as $\mu_l^{T(t)}$ and $\sigma_l^{T(t)}$, respectively. One useful metric for evaluating the degree of alignment between the test data and each source is the distance between their respective batch statistics. A smaller distance implies a stronger correlation between the test data and the corresponding source. Assuming that the batch-mean statistic per node of the BN layers to be a univariate Gaussian, we calculate the distance (KL divergence) between the $j$-th source (approximated as ${\cal{N}}(\mu_l^j,(\sigma_l^{j})^2)$) and the $t$-th test batch  (approximated as ${\cal{N}}(\mu_l^{T(t)},(\sigma_l^{T(t)})^2)$) as follows (derivation in Appendix Section \ref{sec:KL}):
\begin{equation}
\small
\begin{split}
    & \mathrm{\theta}_j^t  = \sum_{l}\mathcal{D}_{KL} \left [ \mathcal{N}\left(\mu_l^{T(t)},(\sigma_l^{T(t)})^2\right), \mathcal{N}\left(\mu_l^j,(\sigma_l^j)^2\right)\right] 
                    = \\ & \sum_{l=1}^{n_j} \sum_{m=1}^{d_j^l} \log \left (\frac{\sigma_{lm}^j }{\sigma_{lm}^{T(t)}} \right)
                   {+} \frac{\left(\sigma_{lm}^{T(t)}\right)^2{+}\left(\mu_{lm}^j-\mu_{lm}^{T(t)}\right)^2}{2 \left(\sigma_{lm}^{j}\right)^2}  {-} \frac{1}{2} \nonumber
\end{split}
\end{equation}
where subscript $lm$ denotes the $m$-th node of $l$-th layer. After obtaining the distances, we use a softmax function denoted by $\delta(\cdot)$ to normalize their negative values. The softmax function is defined as $\delta_j(a)=\frac{\exp(a_j)}{\sum_{i=1}^N \exp(a_i)}$, where $a \in \mathbb{R}^N$, and $j\in {1,2,\dots,N}$. If $\mathbf{\theta}^t = \left[\mathrm{\theta}_1^t, \mathrm{\theta}_2^t \ldots \mathrm{\theta}_N^t\right]^\top \in \mathbb{R}^N$ is the vectorized form of the distances from all the sources, we set
\begin{equation}
  \mathbf{w}_{init}^{(t)}= \delta(-\mathbf{\theta}^t)
  \label{w_init}
\end{equation}
where $\mathbf{w}_{init}^{(t)}$ is the initialization for $\mathbf{w}$. As we shall see, this choice leads to a substantial performance boost compared to random initialization.

\noindent\textbf{(ii) Optimal step size:} Since we would like to ensure rapid convergence of optimization in Eqn.~\ref{opt:main_opt}
, we select the optimal step size for gradient descent in the initial stage. Given an initialization $\mathbf{w}_{init}^{(t)}$ and a step size $\alpha^{(t)}$, we compute the second-order Taylor series approximation of the function $\mathcal{L}_w^{(t)}$ at the updated point after one gradient step. Next, we determine the best step size $\alpha^{(t)}_{best}$ by minimizing the approximation with respect to $\alpha^{(t)}$. This is essentially an approximate Newton's method (details in Appendix section \ref{sec:newton}) and has a closed-form solution given by
\begin{equation}
\scriptsize
    \alpha_{best}^{(t)} = \left[\left(\nabla_{\mathbf{w}}\mathcal{L}_w^{(t)}\right)^\top \left(\nabla_{\mathbf{w}}\mathcal{L}_w^{(t)}\right)/\left(\nabla_{\mathbf{w}}\mathcal{L}_w^{(t)}\right)^\top \mathcal{H}_w \left(\nabla_{\mathbf{w}} \mathcal{L}_w^{(t)}\right)\right] \Bigg|_{\mathbf{w}^{init}}.
    \label{alpha}
\end{equation}
Here $\nabla_{\mathbf{w}}\mathcal{L}_w^{(t)}$ and $\mathcal{H}_w$ are the gradient and Hessian of $\mathcal{L}_w^{(t)}$ with respect to $\mathbf{w}$. Together with $\mathbf{w}_{init}^{(t)}$ and $\alpha_{best}^{(t)}$, optimization of (~\ref{opt:main_opt}) converges very quickly as demonstrated in the experiments (\textit{in Table \ref{tab:ablation} of Appendix}). Please note that, \textit{we calculate the Hessian for only $n$ scalar parameters, with $n$ representing the number of source models. Typically, in common application domains, addressing distribution shifts requires only a small number of source models, making the computational overhead of calculating hessian negligible}.

Please refer to Algorithm~\ref{algo1} for a complete overview of \shortname{}. 
%
\subsection{Theoretical insights regarding combination weights}
\begin{theorem}[Convergence of Optimization~\ref{opt:main_opt}.] The Optimization ~\ref{opt:main_opt} converges according to the rule as follows:
\begin{equation}
    \frac{1}{(k+1)} \sum_{j=0}^k \| \nabla_{\aleph} \mathcal{L}_w (\mathbf{w^{(j)}}) \|_2^2 \leq \frac{2 (\mathcal{L}_w(w^{(0)}) - \mathcal{L}_w(w^\star))}{\alpha_{best}^{(t)} (k+1)}
\end{equation}
where, $\nabla_{\aleph}$ represents the gradient of the objective function over the set of n-simplex $\aleph$ and $j$ represents the iteration number.

\label{Thm:convergence}

\end{theorem}

\begin{proof}
    Please refer to the Appendix (Section \ref{sec:proof}) for the proof.
\end{proof}

\noindent \textbf{Implication of Theorem \ref{Thm:convergence}.} The theorem tells us that to make the optimization converge faster with fewer iterations (small $k$), it is crucial to start with a good initialization close to the best solution ($(\mathcal{L}(w^{(0)}) - \mathcal{L}(w^\star))$ should be small). By using Eqn.~\eqref{w_init}, we ensure this condition for quicker convergence. Also, please note that in Theorem \ref{Thm:convergence}, $j$ denotes the iteration number in the optimization process, and for simplicity, the batch number $t$ has been intentionally omitted from the notation. 

\label{sec:algorithm}
\begin{algorithm}[h!]
\footnotesize
\SetAlgoLined
\textbf{Input:} Pre-trained source models $\{\mathrm{f}^j_S\}_{j=1}^N$, streaming sequential unlabeled test data $\{x_i^{(1)}\}_{i=1}^B\rightarrow \{x_i^{(2)}\}_{i=1}^B\rightarrow \ldots \{x_i^{(t)}\}_{i=1}^B \rightarrow \ldots$  \\
\textbf{Output:} Optimal inference model for $t$-th test batch $\mathrm{f}_T^{(t)} \ \forall t$\\
\textbf{Initialization:} Assign $\mathrm{f}^{j(1)}_S \leftarrow \mathrm{f}^j_S \ \forall j$ \\
\While{$t \geq 1$}{
 Set initial $\mathbf{w}_{init}^{(t)}$ using Eqn.~\eqref{w_init}\\
 Set $\alpha_{best}^{(t)}$ using Eqn.~\eqref{alpha} \\
 Solve optimization~\ref{opt:main_opt} to get $\mathbf{w}^{\star(t)}$\\
 Infer the test batch $t$ using inference model $\mathrm{f}_T^{(t)}$ using Eqn.~\eqref{model_inf} \\
 Find source index $k$ such that $k= \text{arg} \ \underset{1\leq j\leq N}{\text{max}} \ \mathrm{w}^{\star(t)}_j$ \\
 Update source model $\mathrm{f}_S^{k(t)}$ according to Model Parameter Update paragraph of Section \ref{sec:learning_comb_weight} to get $\overline{\mathrm{f}_S^{k(t)}}$ \\
 \For{$1 \leq j \leq N$}{
\If{$j=k$}{
Set $\mathrm{f}^{j(t+1)}_S \leftarrow \overline{\mathrm{f}_S^{j(t)}}$ \\
\Else{Set $\mathrm{f}^{j(t+1)}_S \leftarrow \mathrm{f}_S^{j(t)}$}
}
}
}
\caption{Overview of \shortname} 
\label{algo1}
\end{algorithm}
\vspace{-1em}

\subsection{Theoretical insights regarding model update}
We now provide theoretical justification on how \shortname~ selects the best source model by optimally trading off model accuracy and domain mismatch. At time $t$, let $\mathrm{f}_S^{(t)}$ be the set of source models defined as $\left[\mathrm{f}_S^{1(t)}~\mathrm{f}_S^{2(t)}~\dots~\mathrm{f}_S^{N(t)}\right]$. \shortname~ aims to learn a combination of these models by optimizing weights $w$ on the target domain. For simplicity of exposition, we consider convex combinations $w\in\Delta$ where $\Delta$ is the $N$-dimensional simplex.

To learn $\w\in\Delta$, \shortname{} runs empirical risk minimization on the target task using a loss function $\ell(\cdot)$ with pseudo-labels generated by $\w$-weighted source models. Let $\Lc(f)$ denote the target population/test risk of a model $f$ (with respect to ground-truth labels) and $\lopt$ represent the optimal population risk obtained by choosing the best possible $w\in\Delta$ (i.e.~oracle risk). We introduce the functions: \textbf{(1)} $\Psi$ which returns the distance between two data distributions and \textbf{(2)} $\varphi$ which returns the distance between two label distributions. We note that, rather than problem-agnostic metrics like Wasserstein, our $\Psi,\varphi$ definitions are in terms of the loss landscape and source models $\mathrm{f}_S^{(t)}$, hence tighter. We have the following generalization bound at time step $t$ (precise details in Appendix Section \ref{sec:proof}).

\begin{theorem}
Consider the model $\mathrm{f}_T^{(t)}$ with combination weights $\w^{\star(t)}$ obtained via \shortname{} by minimizing the empirical risk over $B$ IID target examples per Eqn.~\ref{model_inf}. Let $\hat{y}^{(t)}_\w$ denote the pseudo-label variable of $\w$-weighted source models and $\mathcal{D}^{(t)}_\w=\sum_{i=1}^N \mathrm{w}_i^{(t)} \mathcal{D}_{S_i}^{(t)}$ denote weighted source distribution. Under Lipschitz $\ell$ and bounded $\mathrm{f}_S^{(t)}$, with probability at least $1-3e^{-\tau}$ over the target batch, test risk obeys 
\begin{align*}
\small
    \underbrace{\mathcal{L}(\mathrm{f}_T^{(t)})}_{\text{\shortname}} -\underbrace{\lopt}_{\text{Optimal}}\leq \underset{\mathrm{w} \in \Delta}{\text{min}} \{ \underbrace{\Psi(\mathcal{D}_T^{(t)}, \mathcal{D}^{(t)}_\w)}_{\textbf{shift}} + \underbrace{\varphi(\hat{y}_\w^{(t)},y_\w^{(t)})}_{\textbf{quality}}\}  +  \sqrt{\Tilde{\mathcal{O}}((N+\tau)/B)}. 
\end{align*}
\label{theorem:thm1}
\end{theorem}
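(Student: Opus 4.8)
The plan is to decompose the excess risk $\mathcal{L}(\f_T^{(t)}) - \lopt$ into a statistical (uniform-convergence) term, a distribution-divergence term, and a pseudo-label mismatch term, and to control each separately. Write $\hat{\mathcal{L}}(\w)$ for the empirical risk over the $B$ target examples with MeTA's pseudo-labels, and $\mathcal{L}(\w)$ for the corresponding population risk of the $\w$-weighted ensemble, so that $\f_T^{(t)}$ is the model at $\w^{\star(t)} = \arg\min_{\w \in \Delta} \hat{\mathcal{L}}(\w)$ and $\mathcal{L}(\f_T^{(t)}) = \mathcal{L}(\w^{\star(t)})$. For an arbitrary comparator $\w \in \Delta$, inserting and subtracting empirical and population quantities gives the telescoping chain
\begin{align*}
\mathcal{L}(\f_T^{(t)}) - \lopt \le{}& [\mathcal{L}(\w^{\star(t)}) - \hat{\mathcal{L}}(\w^{\star(t)})] + [\hat{\mathcal{L}}(\w^{\star(t)}) - \hat{\mathcal{L}}(\w)] \\
&+ [\hat{\mathcal{L}}(\w) - \mathcal{L}(\w)] + [\mathcal{L}(\w) - \lopt],
\end{align*}
where the middle bracket is nonpositive by optimality of $\w^{\star(t)}$, the first and third are generalization gaps, and the last is a purely population-level domain-adaptation quantity.

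First I would control the statistical brackets. Because $\ell$ is Lipschitz and $\f_S^{(t)}$ is bounded, each per-example loss is bounded, so McDiarmid's bounded-difference inequality controls the deviation at any fixed $\w$. To make this uniform over the $(N-1)$-dimensional simplex I would pass to the Rademacher complexity of the class $\{x \mapsto \ell(\sum_{i=1}^N \w_i \f_{S_i}^{(t)}(x), \cdot) : \w \in \Delta\}$, apply Talagrand's contraction to strip the Lipschitz loss, and bound the complexity of the linear-in-$\w$ hull over $\Delta$. This produces the $\sqrt{\tilde{\mathcal{O}}((N+\tau)/B)}$ rate, with $N$ entering through the simplex dimension and $\tau$ through the deviation level. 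Two of the three $e^{-\tau}$ failure events arise from the two generalization brackets (applied at $\w^{\star(t)}$ and at the comparator $\w$), and the third from the concentration of the empirical estimate of the divergence below.

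Next I would split the population term $\mathcal{L}(\w) - \lopt$. Adding and subtracting the risk of the $\w$-weighted source model, a change-of-measure / integral-probability-metric argument bounds the gap between evaluating on $\mathcal{D}_T^{(t)}$ versus on $\mathcal{D}^{(t)}_\w$ by the divergence $\Psi(\mathcal{D}_T^{(t)}, \mathcal{D}^{(t)}_\w)$, where the Lipschitz and boundedness hypotheses certify that the composed loss lies in the witness class defining $\Psi$. The residual discrepancy between fitting the weighted model against the pseudo-labels $\hat{y}_\w^{(t)}$ rather than the true labels $y_\w^{(t)}$ is precisely the term $\varphi(\hat{y}_\w^{(t)}, y_\w^{(t)})$. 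Since every step holds for an arbitrary $\w \in \Delta$ while the left-hand side is independent of $\w$, taking the minimum over the simplex yields $\min_{\w \in \Delta}\{\Psi + \varphi\}$, and a union bound over the three failure events collects the probability into $3e^{-\tau}$.

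The main obstacle I anticipate lies in the uniform-convergence step, and specifically in its interaction with the label term. The pseudo-labels $\hat{y}_\w^{(t)}$ are themselves functions of $\w$, being the soft/entropy targets of the $\w$-weighted ensemble, so the loss class is indexed by $\w$ not only through the predictor but also through the target. This self-referential coupling breaks the usual fixed-label assumption of Ben-David–style domain-adaptation bounds and prevents a naive contraction argument. Resolving it — either by establishing joint Lipschitz control in $\w$ for both arguments of $\ell$, or by absorbing the induced label dependence cleanly into the $\varphi$ term while verifying the loss class remains low-complexity — is where the technical care of the argument will be concentrated.
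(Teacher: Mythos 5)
Your proposal is correct in outline and arrives at the same bound, but it takes a genuinely different route from the paper. The paper does not prove the theorem from scratch: it invokes Corollary 1 of Oymak et al.\ (2021) --- a generalization bound for a model whose hyperparameter is tuned on a validation set --- and translates it term by term: train/validation becomes source/target, the tuned hyperparameter $\alpha$ becomes the combination weight $\w$, the effective dimension $h_{eff}$ becomes the number of sources $N$, $n_{\nu}$ becomes the batch size $B$, and the Rademacher-complexity term and the constant $\delta$ are simply dropped. In particular, the probability $1-3e^{-\tau}$ and the rate $\sqrt{\Tilde{\mathcal{O}}((N+\tau)/B)}$ are inherited wholesale from that corollary, so your attribution of the three failure events to two generalization brackets plus a divergence-concentration step is your own construction, not the paper's. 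The only new work in the paper is the pseudo-label term: the added mismatch is split three ways --- target pseudo-label to target ground truth, target ground truth to source ground truth, and source ground truth to source pseudo-label --- with the first two components absorbed into $\Psi$ and the third defining $\varphi$; your change-of-measure plus residual-$\varphi$ bookkeeping is the same idea with slightly coarser accounting. As for what each approach buys: your self-contained telescoping with Rademacher/Talagrand/McDiarmid machinery makes explicit where $N$, $\tau$, and $B$ enter, but completing it would require verifying that the simplex complexity really yields the $N/B$ rate and, as you rightly flag, resolving the self-referential dependence of $\hat{y}^{(t)}_\w$ on $\w$ --- a coupling the paper's proof also glosses over (the theorem is explicitly labeled informal, and the paper's three-way decomposition is asserted rather than derived). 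The paper's reduction is shorter and rigorous modulo the cited corollary but hides the mechanism; your sketch exposes the mechanism at the cost of leaving the coupling step, which is exactly where a fully rigorous proof would have to concentrate its effort.
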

\begin{proof}
    Please refer to the Appendix (Section \ref{sec:proof}) for the proof.
\end{proof}



\paragraph{Discussion.} In a nutshell, this result shows how \shortname{} strikes a balance between: (1) choosing the domain that has the smallest \textbf{shift} from target, and (2) choosing a source model that has high-\textbf{quality} pseudo-labels on its own distribution (i.e.~$\hat{y}_\w^{(t)}$ matches $y_\w^{(t)}$). From our analysis, it is evident that, rather than adapting the source models to the target distribution, if we simply optimize the combination weights to optimize pseudo-labels for inference, the left side excess risk term ($\mathcal{L}(\mathrm{f}_T^{(t)})-\lopt$) becomes upper bounded by a relatively modest value. This is because the \textbf{shift} and \textbf{quality} terms on the right-hand side are optimized with respect to $\mathrm{w}$. We note that $\sqrt{N/B}$ is the generalization risk due to finite samples $B$ and search dimension $N$.

To further refine this, our immediate objective is to tighten the upper bound. This can be achieved by individually adapting each source model to the current test data, all the while maintaining the optimized $\mathrm{w}$ constant. Yet, such an approach is not ideal since our second goal is to preserve knowledge from the source during continual adaptation.
To attain our desired goal, we must relax the upper bound, reducing our search over $\mathrm{w} \in \hat{\Delta}$. Here, $\hat{\Delta}$ is the discrete counterpart of the simplex $\Delta$. The elements of $\hat{\Delta}$ are one-hot vectors that have all but one entry zero. The elements of $\hat{\Delta}$ essentially represent discrete model selection. Examining the main terms on the right reveals that: (i) source-target distribution shift and (ii) divergence between ground-truth and pseudo-labels are all minimized when we select the source model with the highest correlation to target. This model, denoted by $\f^{\star(t)}_S$,  essentially corresponds to the largest entry of $\w^{\star(t)}$ and presents the most stringent upper bound within the $\hat{\Delta}$ search space. Thus, to further minimize the right hand side, the second stage of \shortname{} adapts $\f^{\star(t)}_S$ with the current test data. Crucially, besides minimizing the target risk, this step helps avoids forgetting the source because $\f^{\star(t)}_S$ already does a good job at the target task. Thus, during optimization on target data, $\f^{\star(t)}_S$ will have small gradient and will not move much, resulting in smaller forgetting. Please refer to the Appendix (Section \ref{sec:proof}) for more detailed discussion along with the proof of this theorem.

\section{Experiments}


\noindent \textbf{Datasets.} 
We demonstrate the efficacy of our approach using both \textbf{static target distribution} and \textbf{dynamic target data distributions}. For static case, we employ the \textit{Digits} and \textit{Office-Home} datasets \cite{venkateswara2017deep}. For the dynamic case, we utilize  
\textit{CIFAR-100C} and \textit{CIFAR-10C} \cite{hendrycks2019benchmarking}. Detailed descriptions of these datasets along with results on segmentation task can be found in the Appendix.

\noindent\textbf{Baseline Methods.} As our problem setting is most closely related to test time adaptation, our baselines are some widely used state-of-the-art (SOTA) single source test time adaptation methods: we specifically compare our  algorithm with Tent \cite{wang2020tent}, CoTTA \cite{wang2022continual} and EaTA \cite{niu2022efficient}. These methods deal with adaptation from small batches of streaming data and without the source data, which is our setting, and hence we compare against these as our baselines. To evaluate the adaptation performance, we follow the protocol similar to \cite{ahmed2021unsupervised}, where we apply each source model to the test data from a particular test domain individually, which yields X-Best and X-Worst where ``X" is the name of the single source adaptation method, representing the highest and lowest performances among the source models adapted using method ``X", respectively. For our algorithm, we extend all of the methods ``X" in the multi source setting and call the multi-source counterpart of ``X" as ``X+\shortname". 

\noindent \textbf{Implementation Details.} We use ResNet-18 \cite{he2016deep} model for all our experiments. For solving the optimization of Eq. \eqref{opt:main_opt}, we first initialize the combination weights using Eq. \eqref{w_init} and calculate the optimal learning rate using Eq. \eqref{alpha}. After that, we use 5 iterations to update the combination weights using SGD optimizer and the optimal learning rate. For all the experiments we use a batch size of 128, as used by Tent \cite{wang2020tent}. For more details on implementation and experimental setting see Appendix.

\noindent\textbf{Experiment on CIFAR-100C.} 
We conduct a thorough experiment on this dataset to investigate the performance of our model under dynamic test distribution. We consider 3 corruption noises out of 15 noises from CIFAR-100C, which are adversarial weather conditions namely \textit{Snow}, \textit{Fog} and \textit{Frost}. We add these noises for severity level $5$ to the original CIFAR-100C training set and train three source models, one for each noise. Along with these models, we also add the model trained on clean training set of CIFAR-100. During testing, we sequentially adapt the models across the 15 noisy domains, each with a severity of 5, from the CIFAR-100C dataset \cite{wang2022continual, niu2022efficient}. We report the results for the experiment in Table ~\ref{tab:cifar100}. Moreover, we also conduct an experiment on CIFAR10-C with the exact same experimental settings as with CIFAR100-C. \textit{CIFAR-10C results are in Table \ref{tab:cifar10} of Appendix}.

\begin{table*}[h!]
\caption{\textbf{Results on CIFAR-100C.} 
We take four source models trained on \textit{Clear}, \textit{Snow}, \textit{Fog}, and \textit{Frost}. We employ these models for adaptation on 15 sequential test domains. This table illustrates that even in the dynamic environment X+ \shortname{} performs better than X-Best, which is the direct consequence of optimal aggregation of source models as well as better preservation of source knowledge. (Results in error rate $\downarrow$ (in \%))}
\centering
\footnotesize
\resizebox{1\columnwidth}{!}{
\begin{tabular}{cccccccccccccccc|c}
\hline
             & GN   & SN   & IN   & DB   & GB   & MB   & ZB   & Snow & Frost & Fog  & Bright & Contrast & Elastic & Pixel & JPEG & Mean \\ \hline
Source Worst & 97.7 & 96.5 & 98.2 & 68.8 & 78.1 & 66.1 & 65.1 & 53.6 & 59.3  & 62.0 & 55.8   & 95.4     & 61.9    & 71.5  & 75.2 & 73.7 \\
Source Best  & 90.5 & 89.0 & 94.5 & 50.7 & 48.1 & 51.9 & 44.5 & 30.0 & 29.5  & 28.2 & 39.0   & 81.9     & 44.0    & 38.5  & 57.1 & 54.5 \\ \hline
Tent Worst   & 55.9 & 55.6 & 71.2 & 58.0 & 75.5 & 78.2 & 83.3 & 89.2 & 92.4  & 93.7 & 95.4   & 96.7     & 96.5    & 96.6  & 96.7 & 82.3 \\
Tent Best    & 45.6 & 43.8 & 59.1 & 48.5 & 59.1 & 59.1 & 60.4 & 65.6 & 66.1  & 76.7 & 75.3   & 89.8     & 89.0    & 91.3  & 94.2 & 68.2 \\
\rowcolor{Gray}
Tent + \shortname  & \bf{42.2} & \bf{40.6} & \bf{55.3} & \bf{28.6} & \bf{40.7} & \bf{31.9} & \bf{29.6} & \bf{31.7} & \bf{32.4}  & \bf{30.9} & \bf{28.6}   & \bf{41.5}     & \bf{38.5}    & \bf{34.8}  & \bf{49.9} & \bf{37.1} \\ \hline
EaTA Worst   & 57.7 & 54.0 & 66.5 & 40.6 & 53.2 & 41.4 & 36.8 & 44.0 & 43.5  & 45.4 & 34.8   & 45.4     & 45.7    & 39.9  & 55.7 & 47.0 \\
EaTA Best    & 48.1 & 44.7 & 57.9 & 37.1 & 44.1 & 38.7 & 34.9 & 33.7 & 31.9  & 31.6 & 33.2   & 37.2     & 40.0    & 34.7  & 50.3 & 39.9 \\
\rowcolor{Gray}
EaTA + \shortname  & \bf{43.3} & \bf{40.7} & \bf{54.3} & \bf{27.5} & \bf{39.4} & \bf{30.4} & \bf{27.5} & \bf{29.2} & \bf{29.1}  & \bf{28.3} & \bf{25.9}   & \bf{31.3}     & \bf{33.4}    & \bf{29.0}  & \bf{43.1} & \bf{34.2} \\ \hline
CoTTA Worst  & 59.2 & 57.4 & 68.0 & 40.1 & 52.7 & 42.1 & 40.5 & 47.0 & 46.6  & 47.2 & 39.4   & 43.6     & 44.5    & 41.4  & 47.4 & 47.8 \\
CoTTA Best   & 49.8 & 46.6 & 58.6 & 34.0 & 40.7 & 36.5 & 34.2 & 34.2 & 32.8  & 33.0 & 32.8   & 34.8     & 35.3    & 33.6  & 41.1 & 38.5 \\
\rowcolor{Gray}
CoTTA + \shortname & \bf{44.6} & \bf{43.8} & \bf{57.2} & \bf{27.8} & \bf{37.6} & \bf{30.6} & \bf{28.0} & \bf{29.3} & \bf{29.3} & \bf{28.2} & \bf{26.6} & \bf{30.0} & \bf{32.5} & \bf{29.7} & \bf{41.4} & \bf{34.4} \\ \hline
\end{tabular}}
\label{tab:cifar100}
\vspace{-1em}
\end{table*}

From the table, we can draw two key observations:\\
(i) As anticipated, X+\shortname \ consistently outperforms X-Best across each test distribution, underscoring the validity of our algorithmic proposition.


\begin{wraptable}{r}{0.6\textwidth} 
    \caption{\textbf{Results on Office-Home.} We train three source models using three domains in this dataset and use them for testing on the remaining domain under the TTA setting. Our results demonstrate that X+CONTRAST consistently outperforms all of the baselines (X) ($\%$ error).}
    \centering
    \footnotesize
    \resizebox{0.5\columnwidth}{!}{
    \begin{tabular}{ccccc|c}
    \hline
                 & Ar   & Cl   & Pr   & Rw   & Avg. \\ \hline
    Source Worst & 61.4 & 64.9 & 46.2 & 43.9 & 54.1 \\
    Source Best  & 42.5 & 58.5 & 29.8 & 35.7 & 41.6 \\ \hline
    Tent Worst   & 57.7 & 60.4 & 46.5 & 42.1 & 51.7 \\
    Tent Best    & 41.4 & 54.3 & 27.9 & 36.0 & 39.9 \\
    \rowcolor{Gray}
    Tent + CONTRAST   & \bf{40.7} & \bf{52.5} & \bf{27.4} & \bf{27.4} & \bf{37.0} \\ \hline
    EaTA Worst   & 58.4 & 64.3 & 48.0 & 43.5 & 53.5 \\
    EaTA Best    & 42.1 & 57.8 & 30.3 & 35.9 & 41.5 \\
    \rowcolor{Gray}
    EaTA + CONTRAST   & \bf{40.1} & \bf{53.3} & \bf{28.3} & \bf{28.0} & \bf{37.4} \\ \hline
    CoTTA Worst  & 58.3 & 62.9 & 47.1 & 42.8 & 52.8 \\
    CoTTA Best   & 42.1 & 55.0 & 29.0 & 34.9 & 40.2 \\
    \rowcolor{Gray}
    CoTTA + CONTRAST  & \bf{40.6} & \bf{53.3} & \bf{28.3} & \bf{29.0} & \bf{37.8} \\ \hline
    \end{tabular}}
    \label{tab:office}
\end{wraptable}

(ii) Given that the CoTTA and EaTA methods are tailored to mitigate forgetting, the average error post-adaptation across the 15 noises using these methods is significantly lower than that of Tent, which is not designed for this specific challenge. For instance, in Table~\ref{tab:cifar100}, Tent-Best error is approximately 68.2\%, while CoTTA and EaTA-Best are around 39.9\% and 38.5\%, respectively. However, when these adaptation methods are incorporated into our framework, the final errors are remarkably close: 37.1\% for Tent, 34.2\% for EaTA, and 36.9\% for CoTTA. This suggests that even though Tent is more lightweight and faster compared to the other methods and is not inherently designed to handle forgetting, its performance within our framework is on par with the results obtained when incorporating the other two methods designed to prevent forgetting. This shows the generalizability of our approach to various single-source methods. Note that identical to the experiment on CIFAR100-C, the results on CIFAR10-C in Table \ref{tab:cifar10} follow the same trend where X+\shortname{} outperforms the X-Best.

\noindent\textbf{Experiment on Office-Home.}
\label{sec:office-home}
We report the results of the experiment on static distribution using the Office-Home dataset in Table~\ref{tab:office}. Each column in the table represents a target domain from Office-Home dataset. We train three source models on the remaining Office-Home datasets. For instance, in case of `Ar' column, `Ar' is the target domain where three source models trained on `Cl', `Pr' and `Rw' are adapted in test time. We calculate the test error of each incoming test batch and then report the numbers by averaging the error values over all the batches. The table shows that \shortname \ provides a significant reduction of test error compared to the best single source model. This demonstrates that when presented with an incoming test batch, \shortname \ has the capability to effectively blend all available sources using optimal weights, resulting in superior performance compared to the best single source model. It is important to note that each test batch in this experiment is drawn from the same stationary distribution, which represents the distribution of the target domain. We conduct a similar experiment with the same experimental settings on Digits dataset that can be found in Table \ref{tab:digit} of Appendix.

\noindent\textbf{Analysis of Forgetting.} Here, we demonstrate the robustness of our method against catastrophic forgetting by evaluating the classification accuracy on the source test set after completing adaptation to each domain \cite{niu2022efficient, song2023ecotta, chakrabarty2023sata}. For \shortname, we use our ensembling method to adapt to the incoming domain. After adaptation, we infer each of the adapted source models on its corresponding source test set. For the baseline single-source methods, every model is adapted individually to the incoming domain, followed by inference on its corresponding source test set. The reported accuracy represents the average accuracy obtained from each of these single-source adapted models. 


From Figure~\ref{fig:forgetting}, we note that our method consistently maintains its source accuracy during the adaptation process across the 15 sequential noises. In contrast, the accuracy for each individual single-source method (X) declines on the source test set as the adaptation process progresses. Specifically, Tent, not being crafted to alleviate forgetting, experiences a sharp decline in accuracy. While CoTTA and EaTA exhibit forgetting, it occurs at a more gradual pace. Contrary to all of these single-source methods, our algorithm exhibits virtually no forgetting throughout the process. 

\textbf{Ablation Study.}
We conduct an ablation study in Tables \ref{tab:ablation}, \ref{tab:ablation_entropy_init} in the Appendix to evaluate the impact of various initialization and learning rate strategies on the optimization process described in~(\ref{opt:main_opt}). Our findings demonstrate that the initialization and learning rate configurations generated by our method outperform other alternatives. 

\begin{wrapfigure}{r}{0.5\textwidth} 
    \centering
    \vspace{-2em}
    \includegraphics[width=0.5\columnwidth]{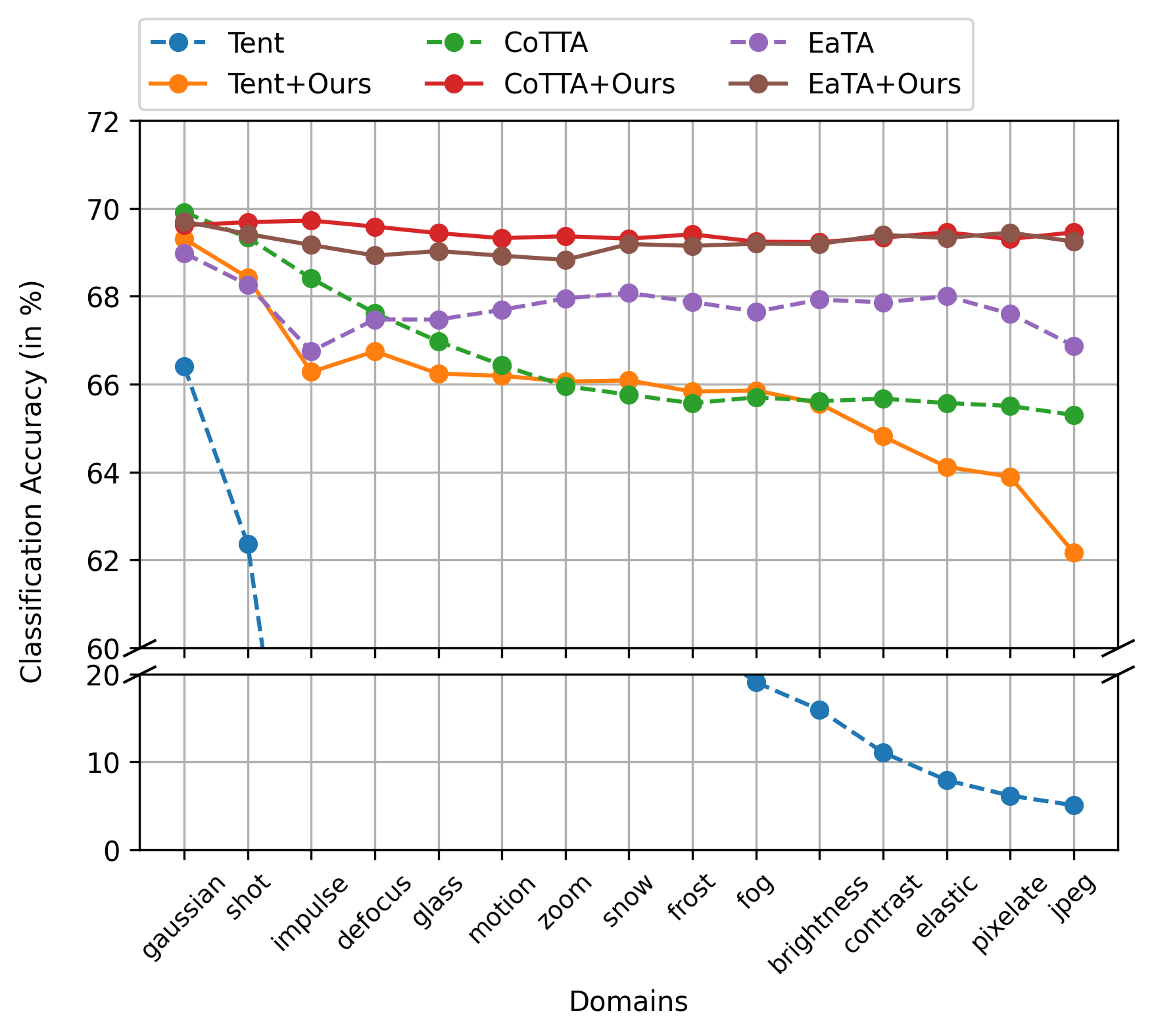} 
    \caption{\textbf{Comparison with baselines in terms of source knowledge forgetting.} Maintaining the same setting as in Table~\ref{tab:cifar100}, we demonstrate that by integrating single-source methods with \shortname, the source knowledge is better preserved during dynamic adaptation. Unlike all these single-source methods, our algorithm demonstrates virtually no forgetting throughout the entire adaptation process.} 
    \vspace{-1.5em}
    \label{fig:forgetting}
\end{wrapfigure}

Additionally, our experiments in Tables \ref{tab:cotta_ablation}, \ref{tab:update} and \ref{tab:ablation_weighted_update} in the Appendix reveal that selectively updating the most correlated model parameters enhances performance compared to updating all model parameters, the least correlated ones, a selected subset of correlated models or even updating the models according to their combination weights. We report the comparison with MSDA in Table \ref{tab:ablation_msda} and Model-Soups in Table \ref{tab:ablation_model_soup}. We also report the values of the combination weights learned by our method. See Section \ref{sec:ablation_appendix} of the Appendix for detailed observations. 


\section{Conclusions} We propose a novel framework called \shortname, that effectively combines multiple source models during test time with small batches of streaming data and without access to the source data. It achieves a test accuracy that is at least as good as the best individual source model. In addition, the design of \shortname{} offers the added benefit of naturally preventing the issue of catastrophic forgetting. To validate the effectiveness of our algorithm, we conduct experiments on a diverse range of benchmark datasets for classification and semantic segmentation tasks. We also demonstrate that \shortname{} can be integrated with a variety of single-source methods. Theoretical analysis of the performance of \shortname{} is also provided.

\section{Broader Impact and Limitations} Implementing multiple models for adaptation on dynamic distribution can yield broad impacts. For instance, this approach could find applications in robot navigation, autonomous vehicles or decision making in dynamically evolving scenarios. In all these cases, the algorithm can intelligently select the optimal combination of models during inference, ensuring sustained performance over extended periods. Our method currently assumes that data sampled within a batch comes from the same distribution. In the future, we aim to explore using mixed data samples from different target domains within a batch.


\section*{Acknowledgments}
The work was partially supported under NSF grant CCF-2008020. Additionally, research was sponsored by the OUSD (R\&E)/RT\&L and was accomplished under Cooperative Agreement Number W911NF-20-2-0267. The views and conclusions contained in this document are those of the authors and should not be interpreted as representing the official policies, either expressed or implied, of the ARL and OUSD(R\&E)/RT\&L or the U.S. Government. The U.S. Government is authorized to reproduce and distribute reprints for Government purposes notwithstanding any copyright notation herein. This work was also partially supported by the Laboratory Directed Research and Development (LDRD) Program (25-006) of Brookhaven National Laboratory under U.S. Department of Energy Contract No. DE-SC0012704.

\bibliography{example_paper}
\bibliographystyle{unsrtnat}

\newpage
\appendix
\onecolumn

\section*{Appendix Overview:}
    
\newcommand\DoToC{%
  \startcontents
  \printcontents{}{1}{\textbf{Contents}\vskip3pt\hrule\vskip5pt}
  \vskip3pt\hrule\vskip5pt
}
\DoToC
\newpage

\section{Proof and discussion of Theorems 1 and 2}
\label{sec:proof}
\begin{proof}[Proof of Theroem~\ref{Thm:convergence}] The optimization \eqref{opt:main_opt} has a structure similar to a class of non convex problems as follows:

\begin{mini}|l|
{x \in \chi}{g(x)-h(x))}{}{}
\label{opt:proof}
\end{mini}

where $\chi$ is a closed convex set, $g(x)$ is  $M_g$ smooth and $h(x)$ is a continuous convex function. In such cases, the optimization converges as follows \cite{khamaru2018convergence}:

\begin{equation}
    \frac{1}{(k+1)} \sum_{j=0}^k \left(\nabla_{\chi} \|f(x^k)\|_2^2 \right) \leq \frac{2 (f(x^0) - f^\star))}{\alpha (k+1)}
\end{equation}

where, $f(x)=(g(x)-h(x))$.

In our case $g(x)=c$, where $c$ is  a constant (smooth and continous) and $h(x)$ is negative of the Shannon entropy, which is continous and convex. Also, $\chi$ is the n-simplex $\aleph$, which is a closed convex set. So, according to the proof derived in \cite{khamaru2018convergence}, we can conclude the bound in Theroem~\ref{Thm:convergence}.
\end{proof}

\begin{proof}[Proof of Theorem~\ref{theorem:thm1}]
    We adapt the theorem from a corollary (corollary 1) in \cite{oymak2021generalization}. In this corollary the following result was derived:
\begin{align*}
    \mathcal{L}(\mathrm{f}_{\hat{\alpha}}^{\tau}) \leq \underset{\alpha \in \Delta}{\text{min}}(l_{\star}^{\alpha}(\mathcal{D}) + &\text{DM}_{\mathcal{D}^{\prime}}^{\mathcal{D}}(\alpha) + 4\Gamma \mathcal{R}_{n_{\tau}}(\mathcal{F}_{\alpha})) \\ &+ \sqrt{\Tilde{\mathcal{O}}((h_{eff}+t)/n_{\mathcal{\nu}})} +\delta
\end{align*}

Here $\mathrm{f}^{\tau}$ in the $\mathrm{f}_{\hat{\alpha}}^{\tau}$ is the trained model on the training($\tau$) distribution $\mathcal{D}^{\prime}$ and $\hat{\alpha}$ is a hyper-parameter that has been empirically optimized by fine tuning on the validation($\mathcal{\nu}$) distribution $\mathcal{D}$. $\mathcal{L}$ is the expected risk over the distribution $\mathcal{D}$. $\text{DM}$ measures the distribution mismatch via difference of sub-optimality gap using the training and validation distribution. $\mathcal{R}_{n_{\tau}}(\mathcal{F}_{\alpha}))$ is the Rademacher complexity of the function class $\mathcal{F}$ with $\alpha$ as the hyper-parameter. The corollary holds for probability of at least $1-3e^{-t}$ and $h_{eff}$ is the effective dimension of the hyper-parameter space. Also $n_{\mathcal{\nu}}$ is the number of samples under the validation. The bound can be first of all easily extended to the source/target scenario instead of train/validation. In our scenario the source models jointly construct the function class $\mathcal{F}_{\alpha}$ where, the hyper-parameter $\alpha$ is the combination weight $\mathrm{w}$. Effective dimension for our case is exactly the number of source model $N$ and instead of $t$ we took $\tau$ as the probability variable. For the sake of simplicity we omitted $\delta>0$  which is a positive constant along with the Rademacher complexity. Also $n_{\mathcal{\nu}}=B$ in our setting since we have $B$ number of samples for the target/validation. Now there is a new term in our bound which is $\varphi$ which was not in the original corollary. This term is used to account for the mismatch between actual and pseudo-labels generated by the source. This is done due to the fact that we do empirical minimization of the entropy of the target pseudo-label since the problem is unsupervised and actual labels are not available. The left side of the inequality is derived using the test/target pseudo-label. Consequently, we can introduce an added distribution mismatch term. This term can be broken down into three components: mismatch from target pseudo to target ground truth (gt), from target gt to source gt, and from source gt to source pseudo label. Of these components, the first two can be readily integrated into the $\Psi(.)$ function , given that it measures the discrepancy between the weighted source and the target. The remaining third component is denoted by the $\varphi(.)$ function. This completes the proof.
\end{proof}

\section{Results on Digits}
\label{sec:digits}
We report here the results of digit classification in Table~\ref{tab:digit}. Similar to the experiment on Office-Home dataset, each column of the table represents a target domain dataset. We train four source models on the rest of the digit datasets. For instance, in case of `MM' column `MM' is the target domain which is adapted using four source models trained on `MT', `UP', `SV' and `SY' respectively.

\begin{table}[h!]
\caption{\textbf{Results on Digits dataset.} We train the source models using four digit datasets to perform inference on the remaining dataset. The column abbreviations correspond to the datasets as follows: `MM' for MNIST-M, `MT' for MNIST, `UP' for USPS, `SV' for SVHN, and `SY' for Synthetic Digits.. The table (reporting $\%$ error rate($\downarrow$)) shows that X+\shortname{} outperforms all of the baselines (X-Best) consistently .}
\centering
\footnotesize
\resizebox{0.6\columnwidth}{!}{
\begin{tabular}{cccccc|c}
\hline
             & MM   & MT   & UP   & SV   & SY   & Avg. \\ \hline
Source Worst & 80.5 & 59.4 & 50.3 & 88.5 & 84.8 & 72.7 \\
Source Best  & 47.7 & 2.2  & 16.8 & 18.3 & 6.7  & 18.3 \\ \hline
Tent Worst   & 84.2 & 46.9 & 41.1 & 90.1 & 85.4 & 69.5 \\
Tent Best    & 45.2 & 2.3  & 16.7 & 14.4 & 6.7  & 17.1 \\
\rowcolor{Gray}
Tent + \shortname  & \bf{37.5} & \bf{1.9}  & \bf{11.2} & \bf{14.2} & \bf{6.7}  & \bf{14.3} \\ \hline
EaTA Worst   & 80.1 & 48.4 & 42.6 & 88.0 & 83.1 & 68.4 \\
EaTA Best    & 47.1 & 2.7  & 18.2 & 18.5 & 7.2  & 18.7 \\
\rowcolor{Gray}
EaTA + \shortname  & \bf{39.5} & \bf{2.0}  & \bf{11.5} & \bf{18.0} & \bf{7.0}  & \bf{15.6} \\ \hline
CoTTA Worst  & 80.0 & 48.3 & 42.8 & 87.9 & 82.9 & 68.4 \\
CoTTA Best   & 47.0 & 2.8  & 18.6 & 18.5 & 7.2  & 18.8 \\
\rowcolor{Gray}
CoTTA + \shortname & \bf{39.6} & \bf{2.0}  & \bf{11.7} & \bf{18.1} & \bf{7.1}  & \bf{15.7} \\ \hline
\end{tabular}}
\label{tab:digit}
\vspace{-1em}
\end{table}

We once again calculate the test error for each incoming test batch and report the results by averaging the errors across all batches. The table demonstrates that \shortname\ achieves a significant reduction in test error compared to the best single source (on average 3\% error reduction than the best source). Another baseline exists that simply uses a naive ensemble of the source models, without any weight optimization. In situations where there's a significant performance gap between the best and worst source models adapted using single-source methods, a uniform ensemble of these models produces a predictor that trails considerably behind the best-adapted source, as noted by \cite{ahmed2021unsupervised}. Referring to Table~\ref{tab:digit}, when testing on the SVHN dataset, the error disparity between the best and worst adapted source models is approximately 70.7\%—a substantial margin. Consequently, using a uniform ensemble in such a scenario results in an error rate of roughly 45.5\% (experimentally found, not reported in the table). This is strikingly higher than our method's error rate of around 14.2\%. \textit{Given these findings, we deduce that uniform ensembling is not a reliable approach for model fusion. Thus, we exclude it from our experiment section's baseline}.




\section{Results on CIFAR-10C}
\label{sec:cifar10c}
Here, we report the results on dynamic target distribution using CIFAR-10C dataset. Note that identical to the experiment on CIFAR100-C in the main paper the results on CIFAR10-C in Table \ref{tab:cifar10} follow the same trend where X+\shortname{} outperforms the X-Best.

\begin{table*}[h!]
\caption{\textbf{Results on CIFAR-10C.} 
We take four source models trained on \textit{Clear}, \textit{Snow}, \textit{Fog} and \textit{Frost}. We employ these models for adaptation on 15 sequential test domains. This table illustrates that even in the dynamic environment X+\shortname~ performs better than X, which is the direct consequence of better retaining source knowledge. (Results in error rate $\downarrow$ (in \%))}
\centering
\footnotesize
\resizebox{\columnwidth}{!}{
\begin{tabular}{cccccccccccccccc|c}
\hline
             & GN   & SN   & IN   & DB   & GB   & MB   & ZB   & Snow & Frost & Fog  & Bright & Contrast & Elastic & Pixel & JPEG & Mean \\ \hline
Source Worst & 84.7 & 81.1 & 89.1 & 42.6 & 55.6 & 36.2 & 32.2 & 30.6 & 39.2  & 28.7 & 18.5   & 76.4     & 26.9    & 50.0  & 32.7 & 48.3 \\
Source Best  & 72.1 & 67.8 & 76.5 & 22.8 & 20.4 & 26.6 & 18.7 & 8.1  & 8.2   & 6.9  & 10.6   & 56.8     & 18.8    & 13.9  & 23.9 & 30.1 \\ \hline
Tent Worst   & 26.6 & 22.7 & 36.1 & 20.0 & 34.9 & 28.8 & 28.7 & 32.8 & 34.4  & 36.1 & 30.3   & 38.2     & 44.8    & 41.7  & 46.8 & 33.5 \\
Tent Best    & 19.3 & 17.6 & 27.9 & 14.5 & 21.1 & 17.6 & 13.5 & 14.3 & 12.6  & 14.4 & 12.4   & 17.0     & 19.0    & 14.3  & 20.4 & 17.1 \\
\rowcolor{Gray}
Tent + CONTRAST  & \bf{17.2} & \bf{15.6} & \bf{25.7} & \bf{9.1}  & \bf{19.1} & \bf{11.7} & \bf{9.0}  & \bf{9.9}  & \bf{10.1}  & \bf{9.7}  & \bf{7.7}    & \bf{11.7}    & \bf{14.5}    & \bf{10.3}  & \bf{17.4} & \bf{13.2} \\ \hline
EaTA Worst   & 31.5 & 30.4 & 44.8 & 14.8 & 33.9 & 16.1 & 13.4 & 20.5 & 21.6  & 19.3 & 11.2   & 18.9     & 23.2    & 19.5  & 29.6 & 23.2 \\
EaTA Best    & 21.9 & 20.8 & 33.9 & 10.5 & 19.6 & 14.3 & 10.6 & 8.6  & 9.0   & \bf{7.5}  & 8.5    & 10.3     & 16.1    & 11.4  & 24.0 & 15.1 \\
\rowcolor{Gray}
EaTA + CONTRAST  & \bf{18.0} & \bf{17.3} & \bf{29.4} & \bf{8.3}  & \bf{18.2} & \bf{10.0} & \bf{7.5}  & \bf{8.0}  & \bf{8.4}   & 7.9  & \bf{6.4}    & \bf{9.1}      & \bf{13.1}    & \bf{10.0}  & \bf{18.1} & \bf{12.6} \\ \hline
CoTTA Worst  & 30.1 & 26.8 & 37.8 & 15.0 & 28.5 & 16.6 & 14.6 & 19.3 & 18.6  & 17.5 & 12.2   & 15.9     & 19.4    & 15.4  & 19.3 & 20.5 \\
CoTTA Best   & 21.0 & 18.5 & 28.0 & 11.2 & \bf{17.3} & 13.3 & 11.1 & 10.6 & 10.4  & 9.5  & 9.7    & 11.2     & 13.1    & 10.5  & 15.6 & 14.1 \\
\rowcolor{Gray}
CoTTA + CONTRAST & \bf{18.4} & \bf{17.0} & \bf{28.0} & \bf{8.4}  & 17.7 & \bf{10.7} & \bf{7.9}  & \bf{9.1}  & \bf{8.4}   & \bf{8.5}  & \bf{6.8}    & \bf{8.3}      & \bf{12.1}    & \bf{9.3}  & \bf{15.3} & \bf{12.4} \\ \hline
\end{tabular}}
\label{tab:cifar10}
\end{table*}

In the single-source scenario, one among the four source models achieves the X-Best (for example CoTTA-Best) accuracy for a specific domain. The determination of which individual model (from the four) will attain the best accuracy for that domain remains uncertain beforehand. Furthermore, the individual source model yielding the X-Best accuracy varies across different domains within CIFAR10-C. However, in our X+CONTRAST approach, the need to deliberate over the selection of one out of the four source models is eliminated. X+CONTRAST reliably outperforms any single source X-model that might achieve the X-Best accuracy.

Individual TTA methods may have distinct advantages. For example, Tent offers several distinct advantages over CoTTA, including its lightweight nature and faster performance. Conversely, CoTTA presents certain benefits over Tent, such as increased resilience against forgetting. Consequently, the choice between TTA methods is dependent on the user's preferences, aligning with the specific task at hand. In this experiment, we have demonstrated that \shortname{} can be integrated with any TTA method of the user's choosing.

\section{Ablation Study}
\label{sec:ablation_appendix}

\subsection{Initialization and Learning Rate}
\begin{table}[h!]
\caption{\textbf{Effect of initialization and step size choice.} Error rate on Office-Home under different choices of initialization and step sizes.}
\vskip 0.08in
\centering
\resizebox{0.6\columnwidth}{!}{
\begin{tabular}{lcccccc}
\toprule
\multicolumn{1}{c}{}&\multicolumn{6}{c}{\textbf{Step size}} \\\cmidrule(lr){2-7}
 \textbf{Initialization}   & $1e-3$ & $1e-2$ & $1e-1$   & $1e0$  & $1e1$    & Ours \\ \midrule
Random & 40.7 & 40.9 & 40.6 & 39.6 & 41.5 & 39.3     \\ 
\rowcolor{Gray}
Ours   & 37.9 & 37.8 & 37.5  & 37.4  & 39.1  & \textbf{37.0}  \\ \bottomrule
\end{tabular}
}
\label{tab:ablation}
\end{table}
Table~\ref{tab:ablation} presents the error rate results on the Office-Home dataset under the same experimental setting as Table~\ref{tab:office} (Appendix) with Tent as the adaptation method, but with different initialization and learning rate choices for solving the optimization in~(\ref{opt:main_opt}). It is evident from the table that our chosen initialization and adaptive learning rate result in the highest accuracy gain.

We additionally show another ablation study in Table \ref{tab:ablation_entropy_init}, where we initialize the combination weights based on the probability of source model predictions. More precisely, we set the initial weights inversely proportional to the entropy of the source model predictions. In simpler terms, a source model with low entropy receives a higher weight, while one with high entropy receives a lower weight.

\begin{table*}[h!]
\caption{\textbf{Initialization based on Entropy.} The table shows the results of entropy based initialization. (Results in error-rate \% $\downarrow$)}
\centering
\footnotesize
\resizebox{\columnwidth}{!}{
\begin{tabular}{cccccccccccccccc|c}
\hline
Update Policy & GN & SN & IN & DB & GB & MB & ZB & Snow & Frost & Fog & Bright & Contrast & Elastic & Pixel & JPEG & Mean \\ \hline
Entropy\_init & 42.7 & 41.1 & 56.9 & 33.5 & 46.5 & 39.4 & 37.2 & 41.0 & 43.2 & 50.6 & 46.7 & 78.6 & 77.9 & 79.5 & 88.7 & 53.6 \\
Ours & 42.2 & 40.6 & 55.3 & 28.6 & 40.7 & 31.9 & 29.6 & 31.7 & 32.4 & 30.9 & 28.6 & 41.5 & 38.5 & 34.8 & 49.9 & 37.1 \\ \hline
\end{tabular}
}
\label{tab:ablation_entropy_init}
\end{table*}

In the presented table for CIFAR-100C, we note a 16.5\% reduction in error resulting from our initialization method. We found that initializing the combination weights using the entropy of the test batch for various sources leads to somewhat uniform initialization. However, when we initialize the combination weights using KL divergence, we achieve a highly effective and peaky prior, favoring the most correlated source model with relatively higher weightage. This clarifies why initializing with entropies fails to converge quickly to the optimum, resulting in significantly poorer outcomes compared to our method.

\subsection{Model Update Policy}
\label{sec:model_update}
In Table \ref{tab:cotta_ablation} and ~\ref{tab:update}, we demonstrate that by updating only the model with the highest correlation to the target domain, our method produces the lowest test accuracy. This is in comparison to scenarios where we either update all models or solely the least correlated one. This empirical observation directly supports our theoretical assertion from the theorem: updating the most correlated model is most effective in preventing forgetting, thereby resulting in the smallest test error during gradual adaptation. We also experiment with another model update policy where a subset of model is updated.

\begin{table*}[t]
\caption{\textbf{Choice of model update (MeTA+CoTTA).} In our experiments using CoTTA as the model update method on CIFAR100-C, we tested four scenarios: updating all models, updating only the least correlated model, updating subset of model, and updating only the most correlated model. Our results indicate that our model selection approach produces the most favorable outcome. (Results in error rate $\downarrow$ (in \%))}
\centering
\footnotesize
\resizebox{\columnwidth}{!}{
\begin{tabular}{cccccccccccccccc|c}
\hline
Update Policy   & GN   & SN   & IN   & DB   & GB   & MB   & ZB   & Snow & Frost & Fog  & Bright & Contrast & Elastic & Pixel & JPEG & Mean  \\ \hline
All Model Update       & \textbf{44.0} & \textbf{42.5} & \textbf{54.5} & 30.1 & 38.9 & 33.4 & 31.7 & 32.7 & 32.1  & 32.6 & 30.2   & 32.8     & 34.5    & 32.0  & \textbf{40.2} & 36.2 \\
Least Corr. Update      & 44.8 & 44.5 & 58.9 & 28.6 & 38.7 & 31.0 & 28.4 & 29.1 & \textbf{28.9}  & 29.5 & 26.9   & 30.9     & 33.8    & 30.5  & 44.0 & 35.2 \\
Subset of Models Update & 44.5 & 43.3 & 57.1 & 28.1 & \textbf{37.5} & 30.6 & 28.4 & 29.9 & 29.9  & 28.8 & 26.8   & 30.2     & \textbf{32.4}    & 30.2  & 40.4 & 34.5 \\
\rowcolor{Gray}
Most Corr. Update       & 44.6 & 43.8 & 57.2 & \textbf{27.8} & 37.6 & \textbf{30.6} & \textbf{28.0} & \textbf{29.3} & 29.3  & \textbf{28.2} & \textbf{26.6}   & \textbf{30.0}     & 32.5    & \textbf{29.7}  & 41.4 & \textbf{34.4} \\ \hline
\end{tabular}}
\label{tab:cotta_ablation}
\end{table*}

\begin{table*}[h!]
\caption{\textbf{Choice of model update (CONTRAST+Tent).} In our experiments using Tent as the model update method on CIFAR100-C, we tested four scenarios: updating all models, updating only the least correlated model, updating subset of model, and updating only the most correlated model. Our results indicate that our model selection approach produces the most favorable outcome. (Results in error rate $\downarrow$ (in \%))}
\centering
\footnotesize
\resizebox{\columnwidth}{!}{
\begin{tabular}{cccccccccccccccc|c}
\hline
Update Policy      & GN   & SN   & IN   & DB   & GB   & MB   & ZB   & Snow & Frost & Fog  & Bright & Contrast & Elastic & Pixel & JPEG & Mean \\ \hline
All Model Update   & \bf{41.6} & 40.9 & 57.8 & 47.1 & 60.2 & 60.3 & 62.1 & 68.6 & 73.2  & 80.9 & 82.1   & 92.4     & 91.2    & 92.5  & 94.9 & 69.7 \\
Least Corr. Update & 43.8 & 41.4 & 56.1 & 31.2 & 41.4 & 34.8 & 31.4 & 33.5 & 33.1  & 37.5 & 31.5   & 41.6     & 41.5    & 37.5  & 53.1 & 39.3 \\
Subset of Models Update  & 43.0 & 41.1 & 56.4 & 33.0 & 47.8 & 38.7 & 37.5 & 41.4 & 45.3 & 51.1 & 46.4 & 83.6 & 81.0 & 60.1 & 92.4 & 53.3 \\
\rowcolor{Gray}
Most Corr. Update  & 42.2 & \bf{40.6} & \bf{55.3} & \bf{28.6} & \bf{40.7} & \bf{31.9} & \bf{29.6} & \bf{31.7} & \bf{32.4}  & \bf{30.9} & \bf{28.6}   & \bf{41.5}     & \bf{38.5}    & \bf{34.8}  & \bf{49.9} & \bf{37.1} \\ \hline
\end{tabular}
}
\label{tab:update}
\end{table*}

\subsubsection{Subset of Model Update}
In this approach, rather than focusing solely on the most correlated source model, we identify and update a subset of source models that exhibit higher correlation than the rest of the models. Specifically, we select models for updating based on their combination weights, choosing only those whose weights exceed $1/n$, with $n$ representing the total number of models. The intuition behind selecting this threshold $1/n$ for subset selection is grounded in the distance of the combination weight distribution with respect to the uniform distribution. A uniform combination weight implies that all models are equidistant w.r.t the test distribution and should be updated. However, if only one model weight surpasses $1/n$, it signifies that only one model exhibits a high correlation with the overall model. 

Results are shown in Table \ref{tab:cotta_ablation} and ~\ref{tab:update}. Several key observations can be extracted from here. Notably, when utilizing the Tent adaptation algorithm, updating a subset of models results in significantly poorer performance compared to updating only the most correlated model. Conversely, with the CoTTA adaptation algorithm, the performance decrement from updating a subset of models is relatively minor compared to updating the most correlated model. This discrepancy can be attributed to the varying degrees of resistance to forgetting exhibited by these adaptation algorithms. Updating multiple models tends to induce forgetting, leading to a decline in overall performance, especially when the adaptation algorithm is not highly resistant to forgetting. Despite the adaptation method's robustness to forgetting, it has been consistently observed that updating the most correlated model not only delivers superior performance but also offers computational advantages over updating a subset of models. This approach simplifies the update process and ensures more efficient use of computational resources.

\subsubsection{Model Update According to Weight}

 Here, we update the model $j$ weighted by $w_{j}$. To do so, we need to properly devise an approach that updates models in measures according to their correlation with the test data. Drawing inspiration from recent studies that employ variable learning rates for single-source TTA, we devise a strategy to adjust the learning rate $\eta_{j}$ used in updating model $j$ based on their respective combination weights $w_{j}$. Specifically, we assigned the highest learning rate $\eta_{max} = 0.001$ ($0.001$ is the learning rate used for both Tent and CoTTA in our experiments) to the model with the greatest combination weight, while the lowest learning rate $\eta_{min} = 0.0001$, (a tenfold reduction) was allocated to the model with the lowest combination weight. For the remaining models, we interpolated their learning rates proportionally between the highest and lowest rates, based on their respective combination weights following the formula: $\eta_{j} = [\left(\frac{w_{j} - w_{min}}{w_{max} - w_{min}}\right) \times (\eta_{\text{max}} - \eta_{\text{min}})] + \eta_{\text{min}} $. In the Table ~\ref{tab:ablation_weighted_update}, we present the resulting error rates for CIFAR-100C dataset using both Tent and CoTTA.

\begin{table*}[h!]
\caption{\textbf{Model Update according to Weight.} The table shows results of updating model according to their respective weights. (Results in error-rate \% $\downarrow$)}
\centering
\footnotesize
\resizebox{\columnwidth}{!}{
\begin{tabular}{cccccccccccccccc|c}
\hline
Update Policy & GN & SN & IN & DB & GB & MB & ZB & Snow & Frost & Fog & Bright & Contrast & Elastic & Pixel & JPEG & Mean \\ \hline
Tent & 41.7 & 39.7 & 53.0 & 33.9 & 43.9 & 36.8 & 34.6 & 37.8 & 39.3 & 41.0 & 36.8 & 56.1 & 49.5 & 41.4 & 60.1 & 43.0 \\
CONTRAST+Tent & 42.2 & 40.6 & 55.3 & 28.6 & 40.7 & 31.9 & 29.6 & 31.7 & 32.4 & 30.9 & 28.6 & 41.5 & 38.5 & 34.8 & 49.9 & 37.1 \\
CoTTA & 44.5 & 43.0 & 56.2 & 28.1 & 38.1 & 30.8 & 28.6 & 29.9 & 29.6 & 28.7 & 27.0 & 29.5 & 31.8 & 29.0 & 38.6 & 34.2 \\
CONTRAST+CoTTA & 44.6 & 43.8 & 57.2 & 27.8 & 37.6 & 30.6 & 28.0 & 29.3 & 29.3 & 28.2 & 26.6 & 30.0 & 32.5 & 29.7 & 41.4 & 34.4 \\ \hline
\end{tabular}
}
\label{tab:ablation_weighted_update}
\end{table*}

 Our investigation reveals that, in scenarios where the update algorithm exhibits limited robustness against forgetting, such as Tent, updating only the model with the highest combination weight proves more advantageous. This is because even marginal updates to uncorrelated models can lead to detrimental forgetting, resulting in poor performance. Conversely, when the update algorithm demonstrates resilience against forgetting (CoTTA), updating the most correlated model impacts performance the most. While updating uncorrelated models does not substantially enhance performance, it significantly increases computational costs. It should also be noted that we have found exactly same finding with our ablation study focused on updating subsets of models. Consequently, we assert that updating the single model with the highest combination weight yields optimal performance across all scenarios.

\subsection{Combination Weight Visualization}
To provide insight into the combination weight distribution, let's consider an example where the source models are trained on the clean, snow, frost, and fog domains using the training data. We then select one of these domains to collect the average weights over all the test data. When the test data is from the fog domain, the weight distribution appears as follows: [0.05, 0.08, 0.09, 0.78]. On the other hand, when the test domain is frost, we observe the following weight distribution: [0.07, 0.14, 0.69, 0.11]. These results clearly illustrate that the weight distribution accurately reflects the correlation between the source models and target domains.

\subsection{Comparison with MSDA}
Existing multi-source source-free methods are designed for offline settings where all the target data are available during adaptation. However, in our setting, data is received batch by batch during adaptation. Therefore, theoretically, these methods are expected to perform worse in our setup. Nevertheless, we compared \shortname{} with the seminal paper \cite{ahmed2021unsupervised} on source-free multi-source Unsupervised Domain Adaptation (UDA), specifically the DECISION method, to demonstrate its effectiveness in an online adaptation setting. We keep the hyperparameters exactly the same as described in the DECISION and perform adaptation on each incoming batch of test data with the number of epochs specified in DECISION.

\begin{table*}[h!]
\caption{\textbf{Comparison with MSDA.} The table compares the performance of our method with MSDA approach DECISION. (Results in error-rate \% $\downarrow$)}
\centering
\footnotesize
\resizebox{\columnwidth}{!}{
\begin{tabular}{cccccccccccccccc|c}
\hline
Update Policy & GN & SN & IN & DB & GB & MB & ZB & Snow & Frost & Fog & Bright & Contrast & Elastic & Pixel & JPEG & Mean \\ \hline
DECISION & 55.0 & 76.2 & 90.5 & 95.2 & 97.3 & 97.9 & 98.2 & 98.0 & 98.3 & 98.4 & 98.4 & 98.7 & 99.0 & 98.9 & 98.9 & 93.3 \\
CONTRAST+Tent & 42.2 & 40.6 & 55.3 & 28.6 & 40.7 & 31.9 & 29.6 & 31.7 & 32.4 & 30.9 & 28.6 & 41.5 & 38.5 & 34.8 & 49.9 & 37.1 \\ \hline
\end{tabular}
}
\label{tab:ablation_msda}
\end{table*}

It is evident from Table \ref{tab:ablation_msda} that DECISION performs notably poorly in the online setting, with an error rate almost 56\% higher than \shortname{}. DECISION utilizes clustering of the entire offline dataset based on the number of classes, a method not feasible to accurately implement in our setting with very small batch sizes. This highlights the necessity of a multi-source method specifically tailored for our setting.

\subsection{Comparison with Model Soups}
Model Soups \cite{wortsman2022model} is a popular approach for utilizing a set of models by averaging their parameters to create a single model for inference on test data. For completeness, we compare our method against Model Soups.

\begin{table*}[h!]
\caption{\textbf{Comparison with Model Soups.} The table compares the performance our method against model soups. (Results in error-rate \% $\downarrow$)}
\centering
\footnotesize
\resizebox{\columnwidth}{!}{
\begin{tabular}{cccccccccccccccc|c}
\hline
Update Policy & GN & SN & IN & DB & GB & MB & ZB & Snow & Frost & Fog & Bright & Contrast & Elastic & Pixel & JPEG & Mean \\ \hline
Model-Soups & 96.82 & 96.26 & 97.08 & 95.17 & 95.33 & 95.30 & 95.22 & 95.17 & 95.86 & 95.28 & 94.96 & 97.41 & 95.04 & 95.05 & 95.86 & 95.72 \\
CONTRAST+Tent & 42.2 & 40.6 & 55.3 & 28.6 & 40.7 & 31.9 & 29.6 & 31.7 & 32.4 & 30.9 & 28.6 & 41.5 & 38.5 & 34.8 & 49.9 & 37.1 \\ \hline
\end{tabular}
}
\label{tab:ablation_model_soup}
\end{table*}

As shown in Table \ref{tab:ablation_model_soup}, the performance of Model Soups is significantly worse compared to our method. Model Soups averages the parameters of models fine-tuned on the same data distribution. However, in our setting, we have models trained on different source domains, making the averaging of model parameters suboptimal.

\section{Implementation Details}
\label{sec:implementation_details}
In this section, we provide a comprehensive overview of our experimental setup. We conducted two sets of experiments: one on a stationary target distribution, and the other on a dynamic target distribution that changes continuously. The reported results in the main paper are average of three runs with different seeds.

\subsection{Stationary Target}
\subsubsection{Digit Classification}
The digit classification task consists of five distinct domains from which we construct five different adaptation scenarios. Each scenario involves four source models, with the remaining domain treated as the target distribution. In total, we construct five adaptation scenarios for our study. 

The ResNet-18 architecture was used for all models, with an image size of $64\times 64$ and a batch size of 128 during testing. Mean accuracy over the entire test set is reported in Table \textcolor{red}{2} of the main paper. For Tent we use a learning rate of 0.01 and for rest of the adaptation method a learning rate of 0.001 is used. We use Adam optimizer for all the adaptation methods. Model parameter update is performed using a single step of gradient descent.

\subsubsection{Object Recognition}
The object recognition task on the Office-Home dataset comprises of four distinct domains from which we construct four different adaptation scenarios, similar to the digit classification setup. We use the same experimental settings and hyperparameters as the digit classification experiment, with the exception of the image size, which is set to $224\times 224$ in this experiment. The results of this evaluation are reported in Table \textcolor{red}{3} of the main paper.

\subsection{Dynamic Target}
\subsubsection{CIFAR-10/100-C}  
In this experiment, we use four ResNet-18 source models trained on different variants of the CIFAR-10/100 dataset: 1) vanilla train set, 2) train set with added fog (severity = 5), 3) train set with added snow (severity = 5), and 4) train set with added frost (severity = 5). To evaluate the models, we use the test set of CIFAR-10/100C (severity = 5) and adapt to each of the domains in a continual manner. The images are resized to $224\times 224$. For all the adaptation methods, a learning rate of $0.001$ with Adam optimizer is used.

\section{Semantic Segmentation}
\label{sec:semantic_seg}
\begin{table*}[h!]
\centering
\caption{\textbf{Result on Cityscape to ACDC:} In this experiment, we test our method on the test data from individual weather conditions (static test distribution) of ACDC. The source models are trained on the train set of Cityscape and its noisy variants. Our method clearly outperforms baseline adaptation method. (Results in \% mIoU)} 
\vskip 0.08in
\begin{tabular}{cccccc}
\hline
\textbf{Method}     & \textbf{Fog}           & \textbf{Rain}          & \textbf{Snow}          & \textbf{Night}       & \textbf{Avg.}       \\ \hline
 Tent-Best  & 25.3          & 21.0            & 19.2          & 12.6        & 19.5          \\

                         CONTRAST       & \textbf{27.7} & \textbf{22.8} & \textbf{21.1} & \textbf{14.0} & \textbf{21.4} \\ \hline
\end{tabular}
\label{tab:seg_static}
\end{table*}

\begin{table}[h!]
\centering
\caption{\textbf{Result on Cityscapes to ACDC for dynamic test distribution:} This table illustrates that over a prolonged cycle of repetitive
test distributions, our model can retain performance better than baseline Tent. ((Results in \% mIoU))}
\vskip 0.08in
\small
\begin{tabularx}{\columnwidth}{l|XXXXXXXXXXXXX}
\hline
Time                 & \multicolumn{1}{l}{t} & \multicolumn{1}{l}{} & \multicolumn{1}{l}{} & \multicolumn{1}{l}{}               & \multicolumn{1}{l}{}  & \multicolumn{1}{l}{} & \multicolumn{1}{l}{} & \multicolumn{1}{l}{}               & \multicolumn{1}{l}{}  & \multicolumn{1}{l}{} & \multicolumn{1}{l}{} & \multicolumn{1}{l}{}               & \multicolumn{1}{l}{}    \\ \hline
Round                & \multicolumn{1}{l}{1} & \multicolumn{1}{l}{}       & \multicolumn{1}{l}{} & \multicolumn{1}{l|}{}              & \multicolumn{1}{l}{3} & \multicolumn{1}{l}{} & \multicolumn{1}{l}{} & \multicolumn{1}{l|}{}              & \multicolumn{1}{l}{5} & \multicolumn{1}{l}{} & \multicolumn{1}{l}{} & \multicolumn{1}{l|}{}              & \multicolumn{1}{l}{All} \\ \hline
Conditions           & Rain                  & Snow                       & Fog                  & \multicolumn{1}{c|}{Night}         & Rain                  & Snow                 & Fog                  & \multicolumn{1}{c|}{Night}         & Rain                  & Snow                 & Fog                  & \multicolumn{1}{c|}{Night}         & Mean                    \\ \hline
Tent-Best            & 20.1                  & 21.3                       & 22.3                 & \multicolumn{1}{c|}{11.3}          & 18.5                  & 17.2                 & 19.5                 & \multicolumn{1}{c|}{8.4}           & 15.8                  & 14.5                 & 17.5                 & \multicolumn{1}{c|}{6.8}           & 16.1                    \\
CONTRAST             & 22.1                  & \textbf{21.4}              & \textbf{24.3}        & \multicolumn{1}{c|}{\textbf{13.4}} & \textbf{21.4}         & \textbf{18.3}        & \textbf{23.5}        & \multicolumn{1}{c|}{\textbf{11.3}} & \textbf{18.6}         & \textbf{15.5}        & \textbf{21.4}        & \multicolumn{1}{c|}{\textbf{10.4}} & \textbf{18.6}           \\ \hline
\end{tabularx}
\label{tab:seg_dynamic}
\end{table}

Our method is not just limited to image classification tasks and can be easily extended to other tasks like semantic segmentation (sem-seg). We assume access to a set of sem-seg source models $\{\mathrm{f}^j_S\}_{j=1}^N$, where each model classifies every pixel of an input image to some class. Specifically, $\mathrm{f}^j_S:\mathbb{R}^{H\times W}\rightarrow \mathbb{R}^{H\times W \times K}$, where $K$ is the number of classes. In this case, the entropy in Eqn. \textcolor{red}{3} of the main paper will be modified as follows:

\begin{equation}
    \mathcal{L}_{w}^{(t)}(\mathbf{w}) = -\mathbb{E}_{\mathcal{D}_T^{(t)}} \sum_{h=1}^H \sum_{w=1}^W \sum_{c=1}^K \hat{y}_{ihwc}^{(t)} \log(\hat{y}_{ihwc}^{(t)}) 
\label{w_ent_semseg}
\end{equation}

Where, $\hat{y}_{ihwc}^{(t)}$ is the weighted probability output corresponding to class $c$ for the  pixel at location $(h,w)$ at time-stamp $t$. We modify Eqn. \textcolor{red}{3} in the main paper, while keeping the rest of the framework the same. 

\subsection{Datasets}
We use the following datasets in our experiments:\\
$\bullet$ \textbf{Cityscapes:} Cityscapes \cite{cordts2016cityscapes} is a large-scale dataset that has dense pixel-level annotations for 30 classes grouped into 8 categories (flat surfaces, humans, vehicles, constructions, objects, nature, sky, and void). There are also fog and rain variants \cite{sakaridis2018model, hu2019depth} of the Cityscapes dataset, where the clean images of Cityscapes have been simulated to add fog and rainy weather conditions. \\ 
$\bullet$ \textbf{ACDC:} The Adverse Conditions Dataset \cite{sakaridis2021acdc} has images corresponding to fog, night-time, rain, and snow weather conditions. Also, the corresponding pixel-level annotations are available. The number of classes is the same as the evaluation classes of the Cityscapes dataset. 

\subsection{Experimental setup}
We use Deeplab v3+ \cite{chen2018encoder} with a ResNet-18 encoder as the segmentation model for all the experiments. We resize the input images to a size of $512 \times 512$. Following the conventional evaluation protocol \cite{cordts2016cityscapes}, we
evaluate our model on 19 semantic labels without considering the void label.

We first experiment in a static target distribution setting. Specifically, we train three source models on clean, fog, and rain train splits of Cityscapes. We then evaluate the models on the test set of each of the weather conditions of ACDC dataset using \shortname{} and baseline Tent models. We use a batch size of 16 and report the mean accuracy over all the test batches. Again, we have updated the combination weights of \shortname{} with SGD optimizer using 5 iterations. For updating the source model in \shortname{} that has the most correlation with the incoming test batch, we use the Adam optimizer with a learning rate of $0.001$ and updated the batch-norm parameters with one iteration. The baseline Tent models are also updated with the same optimizer and learning rate. The results in Table.~\ref{tab:seg_static} clearly demonstrate that \shortname{} outperforms all the baselines on test data from each of the adverse weather conditions.

We also evaluate our method in a dynamic test distribution setting, where we have sequentially incoming test batches from the four weather condition test sets of ACDC dataset. The test sequence includes 5 batches of Rain, followed by 5 batches of Snow, 5 batches of Fog, and finally 5 batches of Night. This sequence is repeated (with the same test images) for a total of 5 rounds. We report the mean accuracy over the 5 batches and include the results for the 1st, 3rd, and 5th rounds in Table~\ref{tab:seg_dynamic}. We use the same hyperparameters as in the dynamic setting of previous experiments with the exception that the batch-size is 16.

\subsection{Visualization}
In Fig. \ref{fig:segmentation}, we present the input images along with the corresponding predicted masks of the baseline models and \shortname{} from the last round. The figure contains rows of input image samples from the four different weather conditions of the ACDC dataset, in the order of rain, snow, fog, and night. \shortname{} is compared with baseline adaptation method Tent, and as shown in Fig. \ref{fig:segmentation}, it is evident that \shortname{} provides better segmentation results compared to the baselines visually. \\

\begin{figure*}[h!]
\setlength\tabcolsep{0.5pt}
\centering
\begin{tabular}{cccccc}
\includegraphics[width=1.2in]{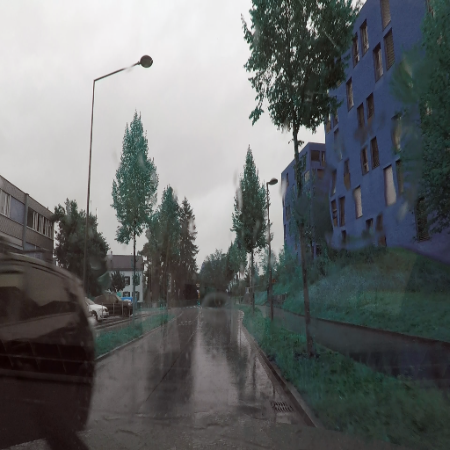} &
\includegraphics[width=1.2in]{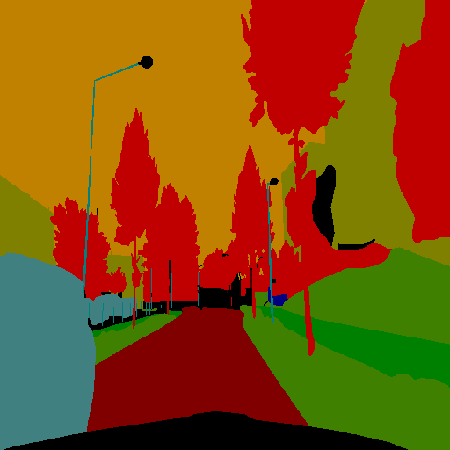} &
\includegraphics[width=1.2in]{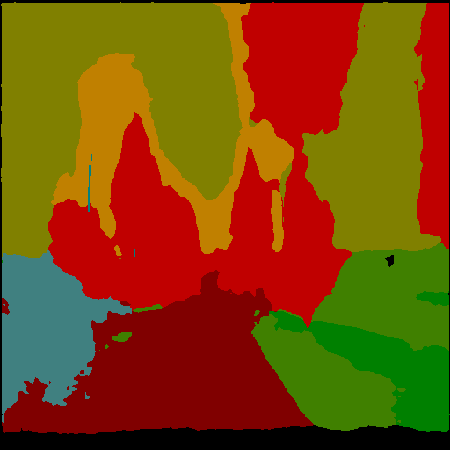} &
\includegraphics[width=1.2in]{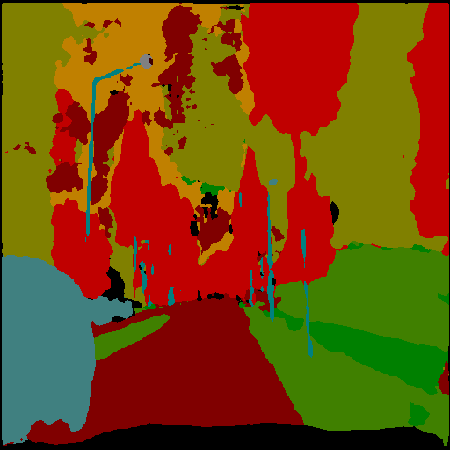}
\\

\includegraphics[width=1.2in]{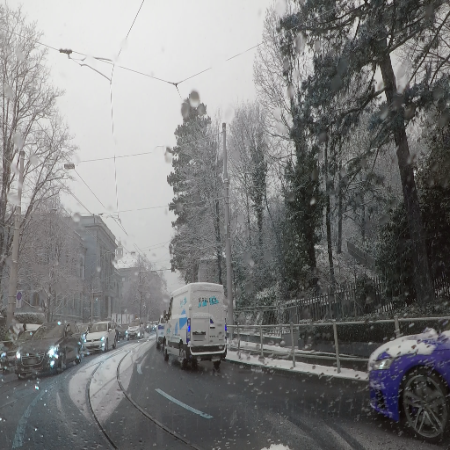} &
\includegraphics[width=1.2in]{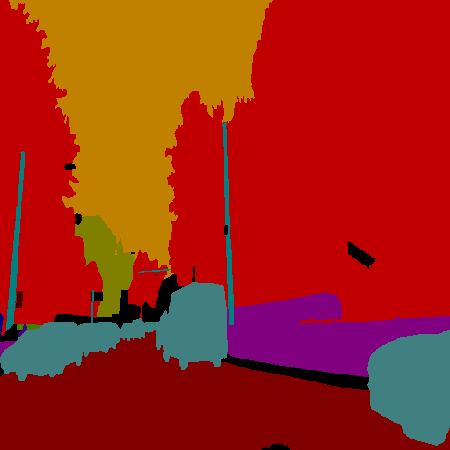} &
\includegraphics[width=1.2in]{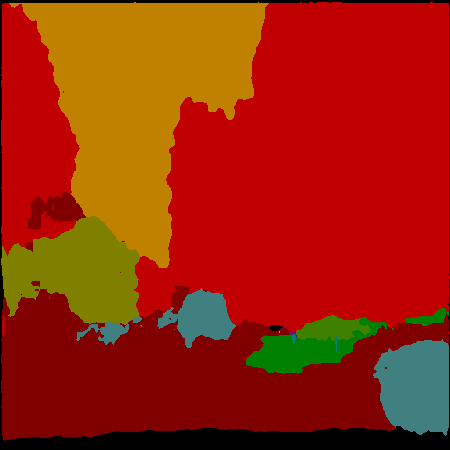} &
\includegraphics[width=1.2in]{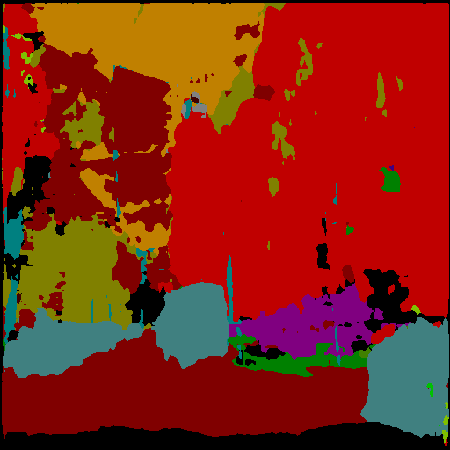}

\\
\includegraphics[width=1.2in]{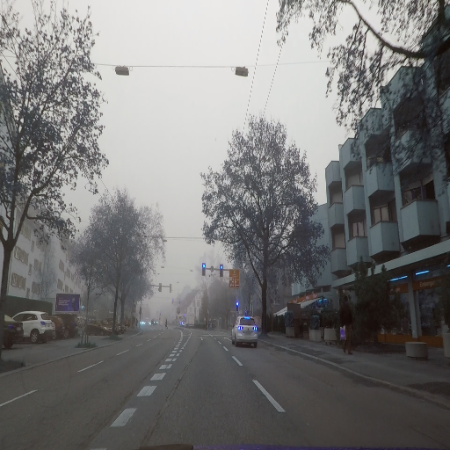} &
\includegraphics[width=1.2in]{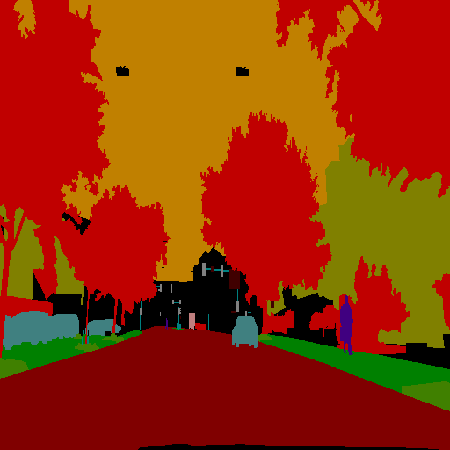} &
\includegraphics[width=1.2in]{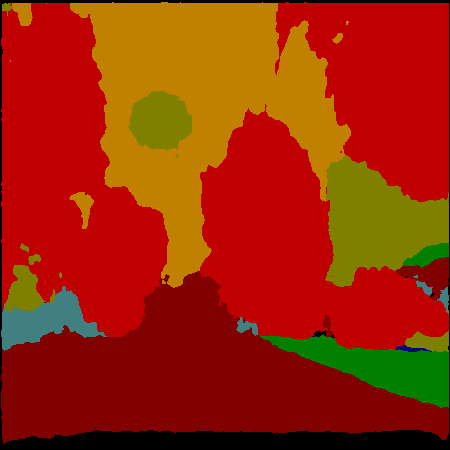} &
\includegraphics[width=1.2in]{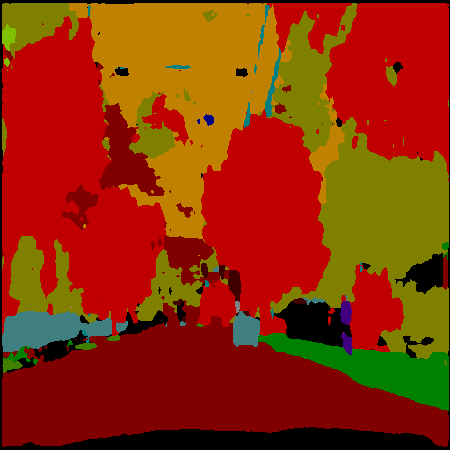}  
\\
\includegraphics[width=1.2in]{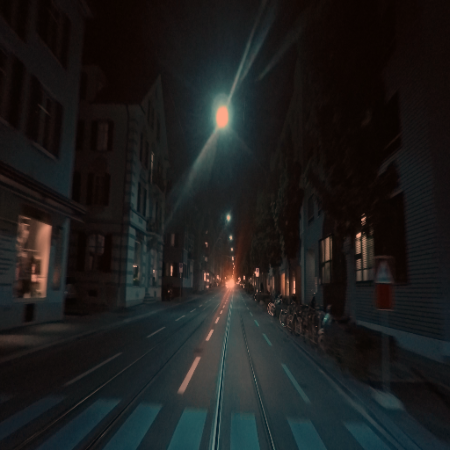} &
\includegraphics[width=1.2in]{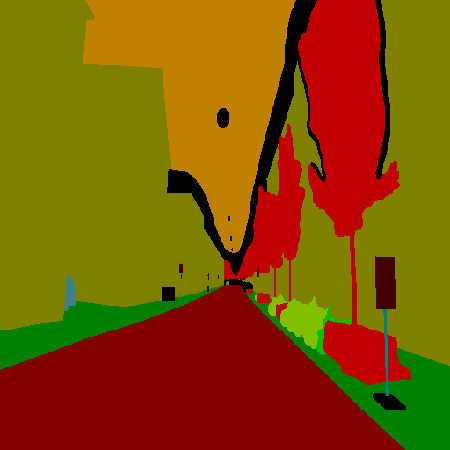} &
\includegraphics[width=1.2in]{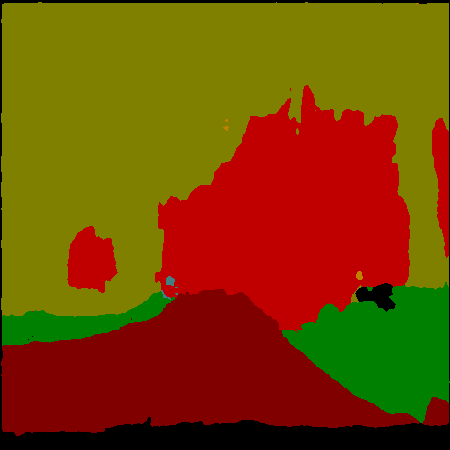} &
\includegraphics[width=1.2in]{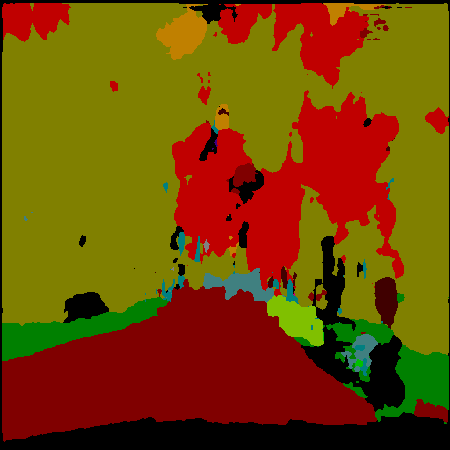}
\\
\textbf{\footnotesize{Input}} &
\textbf{\footnotesize{GT Mask}} &
\textbf{\footnotesize{Tent-Best}} &
\textbf{\footnotesize{\shortname}} 
\\
\end{tabular}
\caption{\textbf{Visual Comparison of \shortname{} with Baselines for Semantic Segmentation Task.} Each row in the figure corresponds to a different weather condition (rain, snow, fog, and night from top to bottom). It is evident that \shortname{} outperforms the baselines in terms of segmentation results.}
\label{fig:segmentation}
\end{figure*}

\section{Additional discussion}
\label{sec:add_discussion}
The $\varphi(.)$ function implies that trained sources should produce high-\textbf{quality} pseudo-labels within their own distribution. Essentially, this function evaluates the effectiveness of the model's training. For instance, even if the \textbf{shift} between the source and target is minimal, a poorly trained source model might still under-perform on the target. Observe that both the \textbf{shift} and the \textbf{quality} terms are minimized when we broaden our search space over $\hat{\Delta}$. This allows us to select a model that exhibits the highest correlation with the test domain, thereby providing us with the most strict bound within the discrete simplex. \\
Examining the issue through the lens of the gradient provides another perspective. By updating the source model that is most correlated with the test data, its gradient will be smaller than those of other models. Over time, this ensures that the model's parameters remain closer to the original source parameters, thereby preventing catastrophic forgetting. let's examine a toy case mathematically of the most correlated source can give us least gradient. \\
Let us assume a binary classification task with linear regression where the final activation is sigmoid $\sigma(.)$ function. Now let's take the pseudo-label for a sample $x$ be $\hat{y}$, where $\hat{y}= \sigma(w^{\top}x)$. Then the entropy $h$ of $\hat{y}$ will be $h = -\hat{y} \log(\hat{y})$. Then we take the derivative of the objective $h$ w.r.t $w$ weight as follows:

\begin{align*}
    & h = -\hat{y} \log(\hat{y}) \\
    \Rightarrow& \frac{\partial h}{\partial w} = (1 + \log(\hat{y})) \hat{y} (\hat{y}-1) x
\end{align*}

Now we can easily verify that if the source model is closest to the test domain, then the pseudo-label generated by the model has very small entropy which also means $\hat{y}$ is either close to $0$ or close to $1$. For both of the cases the derivative expression above goes close to zero which validate the claim of having smallest gradient for highest correlated source.

\section{KL divergence between two univariate Gaussians}
\label{sec:KL}

During the discussion of initialization of the combination weights in Section \textcolor{red}{3.5}, we come up with $\mathrm{\theta}_j^t$ which is calculated using the formula for KL divergence between two univariate Gaussians ${\cal{N}}(\mu_1,\sigma_1^2) \ \text{and} \ {\cal{N}}(\mu_2,\sigma_2^2)$. In this section, we provide the detailed derivation of this below: \\
From the definition of KL divergence, we know the distance between two distributions $p$ and $q$ is given by,
\begin{equation}
\begin{split}
     \mathcal{D}_{KL}(p,q) & = \int_{-\infty}^{+\infty} p(x)\log\left(\frac{p(x)}{q(x)}\right) \,dx  \\
     & = \int_{-\infty}^{+\infty} p(x)\log\left(p(x)\right) \,dx - \int_{-\infty}^{+\infty} p(x)\log\left(q(x)\right) \,dx
\end{split}
\label{KL_def}
\end{equation}

Here in this problem $p$ and $q$ are univariate Gaussians and can be expressed as follows:
\begin{align*}
p(x) = \frac{1}{(2 \pi \sigma_1^2)^{\frac{1}{2}}} \exp{\left({-\frac{(x-\mu_1)^2}{2\sigma_1^2}}\right)}, \quad \quad \quad 
q(x) = \frac{1}{(2 \pi \sigma_2^2)^{\frac{1}{2}}} \exp{\left({-\frac{(x-\mu_2)^2}{2\sigma_2^2}}\right)}.
\end{align*}

Now we compute the second term in Eqn.~\eqref{KL_def} as follows:

\begin{equation}
\begin{split}
     \int_{-\infty}^{+\infty} p(x)\log\left(q(x)\right) \,dx & = \log\left(\frac{1}{(2 \pi \sigma_2^2)^{\frac{1}{2}}}\right) - \int_{-\infty}^{+\infty} p(x) \frac{(x-\mu_2)^2}{2\sigma_2^2} \,dx \\
     & = \log\left(\frac{1}{(2 \pi \sigma_2^2)^{\frac{1}{2}}}\right) - \frac{\int_{-\infty}^{+\infty} x^2 p(x) \,dx -2 \mu_2\int_{-\infty}^{+\infty} x p(x) \,dx +\mu_2^2}{2\sigma_2^2} \\
     & = \log\left(\frac{1}{(2 \pi \sigma_2^2)^{\frac{1}{2}}}\right) - \frac{\mathbb{E}\left[X^2\right]  -2 \mu_2\mathbb{E}\left[X\right] +\mu_2^2}{2\sigma_2^2} \\
     & = \log\left(\frac{1}{(2 \pi \sigma_2^2)^{\frac{1}{2}}}\right) - \frac{\text{Var} \left[X\right] + \left(\mathbb{E}\left[X\right]\right)^2  -2 \mu_2\mathbb{E}\left[X\right] +\mu_2^2}{2\sigma_2^2} \\
     & = \log\left(\frac{1}{(2 \pi \sigma_2^2)^{\frac{1}{2}}}\right) - \frac{\sigma_1^2 + \mu_1^2  -2 \mu_2\mu_1 +\mu_2^2}{2\sigma_2^2} \\
     & = \log\left(\frac{1}{(2 \pi \sigma_2^2)^{\frac{1}{2}}}\right) - \frac{\sigma_1^2 + \left(\mu_1 -\mu_2\right)^2}{2\sigma_2^2} \\
\end{split}
\label{KL_def_2}
\end{equation}

In a similar manner we calculate the first term in
Eqn.~\eqref{KL_def} as follows:

\begin{equation}
\begin{split}
     \int_{-\infty}^{+\infty} p(x)\log\left(p(x)\right) \,dx & = \log\left(\frac{1}{(2 \pi \sigma_1^2)^{\frac{1}{2}}}\right) - \int_{-\infty}^{+\infty} p(x) \frac{(x-\mu_1)^2}{2\sigma_1^2} \,dx \\
     & = \log\left(\frac{1}{(2 \pi \sigma_1^2)^{\frac{1}{2}}}\right) - \frac{\int_{-\infty}^{+\infty} x^2 p(x) \,dx -2 \mu_1\int_{-\infty}^{+\infty} x p(x) \,dx +\mu_1^2}{2\sigma_1^2} \\
     & = \log\left(\frac{1}{(2 \pi \sigma_1^2)^{\frac{1}{2}}}\right) - \frac{\mathbb{E}\left[X^2\right]  -2 \mu_1\mathbb{E}\left[X\right] +\mu_1^2}{2\sigma_2^2} \\
     & = \log\left(\frac{1}{(2 \pi \sigma_2^2)^{\frac{1}{2}}}\right) - \frac{\text{Var} \left[X\right] + \left(\mathbb{E}\left[X\right]\right)^2  -2 \mu_1\mathbb{E}\left[X\right] +\mu_1^2}{2\sigma_1^2} \\
     & = \log\left(\frac{1}{(2 \pi \sigma_1^2)^{\frac{1}{2}}}\right) - \frac{\sigma_1^2 + \mu_1^2  -2 \mu_1^2 +\mu_1^2}{2\sigma_1^2} \\
     & = \log\left(\frac{1}{(2 \pi \sigma_1^2)^{\frac{1}{2}}}\right) - \frac{1}{2} \\
\end{split}
\label{KL_def_1}
\end{equation}

Now combining Eqn.~\eqref{KL_def_1} and Eqn.~\eqref{KL_def_2}, we get the final KL divergence as follows:

\begin{equation}
\begin{split}
     \mathcal{D}_{KL}(p,q) & = \log\left(\frac{1}{(2 \pi \sigma_1^2)^{\frac{1}{2}}}\right) - \frac{1}{2} -  \log\left(\frac{1}{(2 \pi \sigma_2^2)^{\frac{1}{2}}}\right) + \frac{\sigma_1^2 + \left(\mu_1 -\mu_2\right)^2}{2\sigma_2^2}\\
     & =  \log\left(\frac{\sigma_2}{\sigma_1}\right)  + \frac{\sigma_1^2 + \left(\mu_1 -\mu_2\right)^2}{2\sigma_2^2} - \frac{1}{2} \\
\end{split}
\label{KL_def_final}
\end{equation}

\section{Optimal step size in approximate Newton's method}
\label{sec:newton}
In the main paper, we compute the optimal combination weights by solving the optimization below:
\begin{mini}|l|
{\mathbf{w}}{\mathcal{L}_w^{(t)}(\mathbf{w})}{}{}
\addConstraint {\mathrm{w}_j \geq 0, \forall j \in \{1, 2, \dots, N\}}
\addConstraint {\sum_{j=1}^n \mathrm{w}_j=1}
\label{opt:main_opt_supple}
\end{mini}

To solve this problem, we begin by initializing $\mathbf{w}_{init}^{(t)}$ as $\delta(-\mathbf{\theta}^t)$. Next, we determine the optimal step size based on the initial combination weights to minimize the loss $\mathcal{L}_w^{(t)}$ as much as possible. Specifically, we use a second-order Taylor expansion to approximate the loss at the updated point after taking a single step with a step size of $\alpha^{(t)}$. Thus, after one step of gradient descent, the updated point becomes:

\begin{equation}
    \mathbf{w}_{init}^{(t)(1)} = \mathbf{w}_{init}^{(t)} - \alpha^{(t)} \left(\nabla_{\mathbf{w}}\mathcal{L}_w^{(t)}\right)\bigg|{\mathbf{w}^{init}}
\end{equation}
For notational simplicity let us first denote $\mathbf{w}_{init}^{(t)(1)} = \mathbf{w}^{(1)}$, $\mathbf{w}_{init}^{(t)} = \mathbf{w}^{(0)}$ and $\left(\nabla_{\mathbf{w}}\mathcal{L}_w^{(t)}\right)\bigg|{\mathbf{w}^{init}} = \nabla_{\mathbf{w}^{(0)}}\mathcal{L}_w^{(t)}$. We also denote the hessian of $\mathcal{L}_w^{(t)}$ at $\mathbf{w}^{(0)}$ as $\mathcal{H}_{\mathbf{w}^{(0)}}$.
Now, we can write the taylor series expansion of $\mathcal{L}_w^{(t)}$ at $\mathbf{w}^{(1)}$ as follows:
\begin{equation}
\begin{split}
  \mathcal{L}_w^{(t)}(\mathbf{w}^{(1)}) & =  \mathcal{L}_w^{(t)}(\mathbf{w}^{(0)}-\alpha^{(t)} \nabla_{\mathbf{w}^{(0)}}\mathcal{L}_w^{(t)}) \\
  & = \mathcal{L}_w^{(t)}(\mathbf{w}^{(0)}) - \alpha^{(t)} \left(\nabla_{\mathbf{w}^{(0)}}\mathcal{L}_w^{(t)}\right)^\top \left(\nabla_{\mathbf{w}^{(0)}}\mathcal{L}_w^{(t)}\right) + \frac{\left(\alpha^{(t)}\right)^2}{2} \left(\nabla_{\mathbf{w}^{(0)}}\mathcal{L}_w^{(t)}\right)^\top \mathcal{H}_{\mathbf{w}^{(0)}} \left(\nabla_{\mathbf{w}^{(0)}}\mathcal{L}_w^{(t)}\right) + \mathcal{O}((\alpha^{(t)})^3) \\
  & \approx \mathcal{L}_w^{(t)}(\mathbf{w}^{(0)}) - \alpha^{(t)} \left(\nabla_{\mathbf{w}^{(0)}}\mathcal{L}_w^{(t)}\right)^\top \left(\nabla_{\mathbf{w}^{(0)}}\mathcal{L}_w^{(t)}\right) + \frac{\left(\alpha^{(t)}\right)^2}{2} \left(\nabla_{\mathbf{w}^{(0)}}\mathcal{L}_w^{(t)}\right)^\top \mathcal{H}_{\mathbf{w}^{(0)}} \left(\nabla_{\mathbf{w}^{(0)}}\mathcal{L}_w^{(t)}\right)
\end{split}
\label{taylor}
\end{equation}

In order to minimize $\mathcal{L}_w^{(t)}(\mathbf{w}^{(1)})$ we differentiate Eqn.~\eqref{taylor} with respect to $\alpha^{(t)}$ and set it zero to get $\alpha_{best}^{(t)}$. Specifically,
\begin{equation}
\begin{split}
   & \frac{\partial \mathcal{L}_w^{(t)}(\mathbf{w}^{(1)}) }{\partial \alpha^{(t)}} \bigg|_{\alpha^{(t)} = \alpha_{best}^{(t)}}  = 0 \\
   \implies & -\left(\nabla_{\mathbf{w}^{(0)}}\mathcal{L}_w^{(t)}\right)^\top \left(\nabla_{\mathbf{w}^{(0)}}\mathcal{L}_w^{(t)}\right) + \alpha_{best}^{(t)} \left(\nabla_{\mathbf{w}^{(0)}}\mathcal{L}_w^{(t)}\right)^\top \mathcal{H}_{\mathbf{w}^{(0)}} \left(\nabla_{\mathbf{w}^{(0)}}\mathcal{L}_w^{(t)}\right) = 0 \\
   \implies & \alpha_{best}^{(t)} = \frac{\left(\nabla_{\mathbf{w}^{(0)}}\mathcal{L}_w^{(t)}\right)^\top \left(\nabla_{\mathbf{w}^{(0)}}\mathcal{L}_w^{(t)}\right)}{\left(\nabla_{\mathbf{w}^{(0)}}\mathcal{L}_w^{(t)}\right)^\top \mathcal{H}_{\mathbf{w}^{(0)}} \left(\nabla_{\mathbf{w}^{(0)}}\mathcal{L}_w^{(t)}\right)} = \frac{\left(\nabla_{\mathbf{w}}\mathcal{L}_w^{(t)}\right)^\top \left(\nabla_{\mathbf{w}}\mathcal{L}_w^{(t)}\right)}{\left(\nabla_{\mathbf{w}}\mathcal{L}_w^{(t)}\right)^\top \mathcal{H}_w \left(\nabla_{\mathbf{w}} \mathcal{L}_w^{(t)}\right)} \Bigg|_{\mathbf{w}^{init}}
\end{split}
\end{equation}

This is the desired expression of $\alpha_{best}^{(t)}$ in Eqn. 10 in the main paper. \\

Note that $\mathbf{w}^{(1)}$ does not lie within the simplex. To ensure that the updated $\mathbf{w}$ remains within the simplex, we project it onto the simplex after each gradient step. This can be done by applying the softmax operator ($\delta(.)$ in the main paper), which will ensure that the updated weights are normalized and satisfy the constraints of the simplex. Moreover, in an ideal scenario, one would calculate the optimal step size $\alpha_{best}^{(t)}$ after each gradient step, taking into account the updated point. However, for the purpose of our experiment, we calculate $\alpha_{best}^{(t)}$ only for the first step and use this value as the learning rate for the remaining steps in order to avoid hessian calculation repeatedly. In our experiment, we limit the number of steps to 5 in order to ensure quicker inference. Empirically, we have observed that using the obtained step size as fixed throughout the optimization process works reasonably well.

\vskip\medskipamount 
\leaders\vrule width \textwidth\vskip0.4pt 
\vskip\medskipamount 
\nointerlineskip

\end{document}